\newcommand{\sys}{\mbox{\textsc{DUMP}}\xspace}
\algnewcommand\algorithmicinput{\textbf{Input:}}
\algnewcommand\Input{\item[\algorithmicinput]}
\algnewcommand\algorithmicoutput{\textbf{Output:}}
\algnewcommand\Output{\item[\algorithmicoutput]}
\algnewcommand{\LineComment}[1]{\State \(\triangleright\) #1}
\theoremstyle{plain}
\newtheorem{theorem}{Theorem}[section]
\theoremstyle{definition}
\theoremstyle{remark}
\definecolor{DarkGreen}{RGB}{34,139,34}
\title{\sys: Automated Distribution-Level Curriculum Learning for RL-based
LLM Post-training}
\author{%
  Zhenting Wang$^{1}$
  \And Guofeng Cui$^{1}$
  \And Yu-Jhe Li $^{2}$
  \And Kun Wan$^{2*}$
  \And Wentian Zhao$^{2*}$
  \AND \textnormal{$^{1}$Rutgers University\quad $^{2}$Adobe Inc.} \\
}
\begin{document}

\maketitle
\renewcommand{\thefootnote}{\fnsymbol{footnote}}

\begin{abstract}

Recent advances in reinforcement learning (RL)-based post-training have led to notable improvements in large language models (LLMs), particularly in enhancing their reasoning capabilities to handle complex tasks. However, most existing methods treat the training data as a unified whole, overlooking the fact that modern LLM training often involves a mixture of data from diverse distributions—varying in both source and difficulty. This heterogeneity introduces a key challenge: how to adaptively schedule training across distributions to optimize learning efficiency.
In this paper, we present a principled curriculum learning framework grounded in the notion of distribution-level learnability. Our core insight is that the magnitude of policy advantages reflects how much a model can still benefit from further training on a given distribution. Based on this, we propose a distribution-level curriculum learning framework for RL-based LLM post-training, which leverages the Upper Confidence Bound (UCB) principle to dynamically adjust sampling probabilities for different distrubutions. This approach prioritizes distributions with either high average advantage (exploitation) or low sample count (exploration), yielding an adaptive and theoretically grounded training schedule.
We instantiate our curriculum learning framework with GRPO as the underlying RL algorithm and demonstrate its effectiveness on logic reasoning datasets with multiple difficulties and sources. Our experiments show that our framework significantly improves convergence speed and final performance, highlighting the value of distribution-aware curriculum strategies in LLM post-training. Code: \url{https://github.com/ZhentingWang/DUMP}.

\end{abstract}

\section{Introduction}\label{sec:intro}

Reinforcement learning (RL)-based post-training has emerged as a powerful approach for enhancing the capabilities of large language models (LLMs), particularly in areas requiring structured reasoning, multi-step inference, and task-specific generalization~\citep{ouyang2022training, rafailov2023direct, shao2024deepseekmath, guo2025deepseek}. By leveraging reward signals derived from task performance, human feedback, or domain-specific metrics, RL provides a flexible alternative to supervised fine-tuning. Unlike imitation-based methods that merely mimic reference outputs, RL-based approaches allow models to optimize directly toward behavioral objectives, making them especially effective for boosting model performance on complex reasoning and agentic tasks.

While RL-based post-training has become a key technique for enhancing LLM capabilities in reasoning, alignment, and coding, one foundational challenge remains underexplored: \emph{how to dynamically schedule training across heterogeneous data distributions}.
In practice, LLMs are post-trained on datasets drawn from a wide variety of sources---ranging from factual QA to math problems and coding tasks---each differing in knowledge/capability relevance, and learning difficulty~\citep{longpre2023flan,lee2023rlaif,lambert2024t}. This heterogeneity is evident in large-scale post-training datasets such as T\"ulu~3~\cite{lambert2024t}, where prompts span general dialogue, logic puzzles, STEM problems, and multilingual instructions, with widely varying counts, formats, and alignment objectives.
More recently, \textit{next-generation post-training pipelines} (e.g., Seed-Thinking v1.5~\cite{seed_thinking_v15}) have shifted toward \textit{synthetic data generation with controllable parameters}---e.g., configuring logical puzzle difficulty. This allows fine-grained control over the \textit{data distribution}, making \textbf{distribution-level curriculum learning both feasible and increasingly important}.
Despite this, most RL-based pipelines still treat all data distributions equally---uniformly sampling tasks throughout training or relying on static, hand-designed curricula. This static treatment ignores the model’s evolving learning needs and underutilizes the training budget. 
Moreover, it is difficult to handcraft effective curricula when the post-training data comes from multiple distributions lacking clear difficulty labels.
As reinforcement learning becomes increasingly used in post-training and training costs continue to rise, a data-driven curriculum mechanism that dynamically prioritizes learnable distributions is not just desirable, but necessary.

This motivates the need for \textit{automated distribution-level curriculum learning}: a dynamic strategy that adjusts sampling probabilities across data distributions throughout training. While prior work has explored instance-level curricula based on sample difficulty~\citep{pattnaik2024enhancing}, and static/heuristic multi-stage schedules have been applied in LLM post-training~\citep{team2025kimi,xie2025logic}, little attention has been paid to automated, distribution-level scheduling—especially in the context of RL for capability-oriented post-training. The central challenge lies in identifying signals that reflect the current learnability of each distribution and in designing algorithms that can stably and efficiently leverage these signals to guide sampling.

In this paper, we present \sys{} (Automated \textbf{D}istribution-level c\textbf{U}rriculu\textbf{M} learning for RL-based LLM \textbf{P}ost-training), a simple but theoretically grounded approach to address this challenge. Our central insight is that the \textit{magnitude of policy advantages}—the expected absolute difference between a model’s predicted return and its baseline value—serves as a natural proxy for distribution-level learnability. High advantages on specific data distribution indicate underfitting and high potential for improvement on it, while low advantages suggest diminishing returns. Moreover, the statistical reliability of these advantage estimates improves with the number of samples drawn from each distribution.
\sys{} operationalizes this insight by using bandit-style Upper Confidence Bound (UCB) scores to schedule distribution sampling. It maintains a sliding window of recent advantage magnitudes for each distribution and computes a score that balances exploitation (high advantage) and exploration (low visitation). These scores are normalized via a softmax to form sampling weights, which are then used to generate training batches. Unlike fixed or heuristic curricula, \sys{} adapts throughout training based on empirical signals, and can be seamlessly integrated into standard LLM RL pipelines. We instantiate \sys{} with GRPO~\citep{shao2024deepseekmath}, but the method is compatible with any advantage-based RL algorithm.
We evaluate \sys{} on logic reasoning corpora. Our experiments show that \sys{} significantly accelerates convergence and yields stronger performance compared to uniform sampling. Furthermore, we provide theoretical analysis that supports the use of absolute advantages as a surrogate for distribution-level learnability, formalizing its connection to sample efficiency and regret minimization.

We summarize our contributions as follows. \ding{172} We highlight the underexplored challenge of curriculum learning at the distribution level for RL-based post-training aimed at capability enhancement. \ding{173} We propose \sys{}, a theoretically grounded framework that leverages advantage-based UCB scores to adaptively guide training over data distributions. \ding{174} We demonstrate \sys{}'s effectiveness through empirical results and theoretical analysis, showing that it enables faster, more efficient improvement on LLM capabilities.

\vspace{-0.2cm}
\section{Background}\label{sec:related}
\vspace{-0.2cm}
\noindent
\textbf{RL-based LLM Post-training.}
Reinforcement learning (RL) plays a central role in post-training large language models (LLMs), especially for tasks involving reasoning, subjective preference, or long-horizon control. The RLHF framework ~\cite{ouyang2022training,christiano2017deep,ziegler2019fine,bai2022training,glaese2022improving} laid the foundation by aligning models using reward signals derived from human preferences.
Beyond preference alignment, recent RL-based post-training approaches have notably enhanced LLMs' capabilities in complex reasoning tasks, particularly coding and mathematics. For instance, RL post-trained model OpenAI o1~\cite{o1}, o3~\cite{o3,el2025competitive}, DeepSeek-R1~\cite{guo2025deepseek} significantly outperform LLMs without RL post-training such as pre-trained versions of GPT-4o~\cite{hurst2024gpt} and DeepSeek-V3~\cite{liu2024deepseekv3} on challenging mathematics and coding benchmarks (e.g., AIME~\cite{di_zhang_2025} and Codeforces~\cite{codeforces}).
Proximal Policy Optimization (PPO)~\citep{schulman2017proximal} is widely used in post-training due to its clipped objective, which stabilizes training by preventing large policy updates. PPO remains a strong baseline in many LLM alignment settings.
Direct Preference Optimization (DPO)~\citep{rafailov2023direct} simplifies the pipeline by replacing RL rollouts with a classification-style loss derived from a KL-constrained reward maximization objective. While DPO works well on pairwise preference data, it does not naturally support group-wise or comparative feedback.
Group Relative Policy Optimization (GRPO)~\citep{shao2024deepseekmath} addresses this limitation by leveraging group-based feedback. For each input prompt \(x\), GRPO samples a group of \(G\) candidate outputs \(\{o_1, \dots, o_G\} \sim \pi_{\text{ref}}(\cdot|x)\) from a frozen reference policy \(\pi_{\text{ref}}\). Each output \(o_i\) is assigned a reward \(r_i\), and the advantage of \(o_i\) is computed by normalizing its reward relative to others in the group:
\begin{equation}
\hat{A}_i = \frac{r_i - \text{mean}(\{r_1, \dots, r_G\})}{\text{std}(\{r_1, \dots, r_G\}) + \epsilon},
\end{equation}
where \(\epsilon > 0\) is a small constant for numerical stability. These normalized advantages capture the relative quality of outputs within the group.
The model policy \(\pi_\theta\) is then updated by maximizing the following clipped surrogate objective:
{\footnotesize
\begin{equation}
\mathcal{J}_{\text{GRPO}}(\theta) = \mathbb{E}_{x, \{o_i\}} \left[ \frac{1}{G} \sum_{i=1}^G \min \left( \frac{\pi_\theta(o_i|x)}{\pi_{\text{old}}(o_i|x)} \hat{A}_i, \text{clip}\left( \frac{\pi_\theta(o_i|x)}{\pi_{\text{old}}(o_i|x)}, 1 - \epsilon, 1 + \epsilon \right) \hat{A}_i \right) - \beta \, \mathbb{D}_{\text{KL}}(\pi_\theta \| \pi_{\text{ref}}) \right],
\end{equation}
}
where \(\pi_\theta(o_i|x)\) is the probability assigned by the current model to output \(o_i\), \(\pi_{\text{old}}(o_i|x)\) is the same under the model from previous step, and \(\pi_{\text{ref}}(o_i|x)\) is that under the reference model. The first term inside the summation is a clipped policy ratio scaled by \(\hat{A}_i\), similar to PPO~\citep{schulman2017proximal}, which prevents overly large updates. The outer expectation is taken over prompts \(x\) and their sampled output groups \(\{o_i\}\). The second term is a KL divergence penalty that regularizes the updated policy \(\pi_\theta\) to stay close to \(\pi_{\text{ref}}\), weighted by a hyperparameter \(\beta\).
This formulation eliminates the need for an explicit value baseline and stabilizes training by comparing outputs within local groups.

\noindent
\textbf{Curriculum Learning for RL.}
Curriculum learning~\cite{bengio2009curriculum,graves2017automated} organizes training by progressing from easy to hard examples. In RL, curricula often follow task complexity~\cite{justesen2018illuminating,wang2019paired,li2020towards}, or are learned via teacher-student frameworks modeled as partially observable Markov decision process~\cite{matiisen2019teacher,portelas2020teacher}. With the adoption of RL in LLM post-training, curriculum learning has shown potential for improving both training efficiency and model effectiveness. For example, 
Curri-DPO~\cite{pattnaik2024enhancing} constructs instance-level curricula by ranking preference pairs based on the score gap between preferred and dispreferred responses, introducing harder pairs gradually during DPO fine-tuning. Kimi k1.5~\cite{team2025kimi} and Logic-RL~\cite{xie2025logic}, on the other hand, use manually defined heuristic curricula with fixed training stages, e.g., models are first trained on “easy” samples for a pre-specified number of steps, then switched to “hard” samples. These strategies rely on static schedules and heuristic difficulty labels, without adapting to the model’s learning progress. 
While these works demonstrate the benefit of curriculum learning in LLM post-training, most existing approaches focus on instance-level difficulty or use static, manually designed strategies. In contrast, automatic curriculum learning at the distribution level, especially in RL-based post-training, remains underexplored. In this paper, we propose \sys{} to fill this gap by adaptively scheduling training over distributions using advantage-based learnability signals.

\vspace{-0.2cm}
\section{Method}\label{sec:method}
\vspace{-0.2cm}

In this section, we introduce \sys{}, a distribution-level curriculum learning framework for RL-based LLM post-training. We first introduce expected absolute advantage as a proxy for learnability, and formalize the scheduling problem as a multi-armed bandit. We then describe a UCB-based strategy to guide distribution selection, followed by the full implementation of \sys{}.

\vspace{-0.1cm}
\subsection{Measuring Learnability via Absolute Advantage}
\vspace{-0.1cm}

We aim to dynamically assess the usefulness of different data distributions during LLM reinforcement learning post-training. Intuitively, a distribution is more useful (or ``learnable'') if the model can gain more from training on its samples. To help understand and measure the learnability of the data samples from different distributions, we provide the following theorem:

\begin{theorem}[Expected Advantage Magnitude Reflects Learnability]
\label{th:learnability}
Given a policy \(\pi_{\theta}\) and a data distribution \(d\),
the expected absolute advantage \(\mathbb{E}_{x \sim d} \left[ \mathbb{E}_{o_i \sim \pi_\theta(\cdot|x)} \left[ |\hat{A}_i| \right] \right]\) serves as a proxy for how much that distribution \(d\) can help the model improve, where the distribution $d$ consisting of prompts $x \sim d$, each prompt has a group of sampled outputs $\{o_1, \dots, o_n\}$, and \(\hat{A}_i\) denotes the advantage of output \(o_i\).
\end{theorem}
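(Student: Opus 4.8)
The plan is to formalize ``how much distribution $d$ can help the model improve'' as the first-order gain in expected return achievable by a single policy-gradient step restricted to $d$, and then to show that this gain is governed by the expected absolute advantage. Writing $J_d(\theta) = \mathbb{E}_{x \sim d}\,\mathbb{E}_{o \sim \pi_\theta(\cdot|x)}[\hat{A}_i]$ for the surrogate objective on $d$, a gradient-ascent step $\theta' = \theta + \eta\,\nabla_\theta J_d(\theta)$ yields, by a second-order Taylor expansion under an $L$-smoothness assumption on $J_d$, an improvement $J_d(\theta') - J_d(\theta) = \eta\,\|\nabla_\theta J_d(\theta)\|^2 + O(\eta^2)$. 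Thus the per-step improvement is controlled, to leading order, by the squared gradient norm, and the task reduces to relating $\|\nabla_\theta J_d\|$ to $\mathbb{E}[|\hat{A}_i|]$.

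First I would invoke the policy-gradient identity $\nabla_\theta J_d(\theta) = \mathbb{E}_{x \sim d}\,\mathbb{E}_{o \sim \pi_\theta}[\hat{A}_i\,\nabla_\theta \log \pi_\theta(o_i|x)]$. Because the GRPO advantages are mean-centered within each group, $\mathbb{E}_{o \sim \pi_\theta}[\hat{A}_i] \approx 0$, so this gradient is exactly a covariance between the advantage and the score function. Applying the triangle inequality (or Cauchy--Schwarz) together with a uniform bound $\|\nabla_\theta \log \pi_\theta(o|x)\| \le G$ on the score gives
\begin{equation}
\|\nabla_\theta J_d(\theta)\| \le \mathbb{E}_{x \sim d,\, o \sim \pi_\theta}\big[|\hat{A}_i|\,\|\nabla_\theta \log \pi_\theta(o_i|x)\|\big] \le G\,\mathbb{E}_{x \sim d,\, o \sim \pi_\theta}\big[|\hat{A}_i|\big].
\end{equation}
Combined with the Taylor expansion, this shows that a small expected absolute advantage forces the attainable improvement to be small: when the model has either mastered or wholly failed a distribution, rewards within each group are nearly constant, the advantages vanish, and no gradient signal remains. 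This establishes the ``diminishing returns'' half of the proxy relationship.

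For the converse direction---that a large expected absolute advantage indicates genuine room to improve---I would argue that large advantage magnitudes certify a reward gap between the best and worst outputs sampled per prompt, so that reallocating probability mass toward high-advantage outputs strictly raises expected reward, with the optimal feasible reweighting gaining on the order of the advantage spread. I expect this converse to be the main obstacle. The upper bound above can be loose because the covariance defining $\nabla_\theta J_d$ may suffer cancellation when the score directions of high- and low-advantage outputs are misaligned, so a large $\mathbb{E}[|\hat{A}_i|]$ need not, in full generality, produce a proportionally large gradient. To close this gap I would introduce a mild expressivity/alignment assumption on the policy class---e.g., that the score functions separate high- from low-advantage outputs with a correlation bounded below by some $c > 0$---under which $\|\nabla_\theta J_d\| \ge c\,\mathbb{E}[|\hat{A}_i|]$, yielding a matching lower bound and completing the two-sided characterization of expected absolute advantage as a learnability proxy.
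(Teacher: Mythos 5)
Your proposal follows the same core route as the paper's own proof: both pass through the policy-gradient identity $\nabla_\theta \mathcal{J}(\theta) = \mathbb{E}_{x\sim d}\bigl[\mathbb{E}_{o_i\sim\pi_\theta(\cdot|x)}[\hat{A}_i \,\nabla_\theta \log \pi_\theta(o_i \mid x)]\bigr]$ and then relate the magnitude of this gradient to $\mathbb{E}[|\hat{A}_i|]$ under a boundedness assumption on the score function. The differences are in rigor, and they favor you. The paper claims the gradient norm is bounded \emph{below} (written with a ``$\gtrsim$'') by $\mathbb{E}\bigl[|\hat{A}_i|\,\|\nabla_\theta\log\pi_\theta(o_i\mid x)\|\bigr]$; that direction is not valid in general, since the triangle/Jensen inequality gives $\|\mathbb{E}[X]\|\le\mathbb{E}[\|X\|]$, i.e., precisely the \emph{upper} bound you derive. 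The lower bound can fail for exactly the cancellation reason you identify: when the score directions of high- and low-advantage outputs are misaligned, $\mathbb{E}[|\hat{A}_i|]$ can be large while the gradient is small. Consequently, your proposal is more honest about the structure of the claim: the upper bound rigorously gives the ``small advantage implies little attainable improvement'' half (strengthened by your Taylor-expansion formalization of improvement as $\eta\|\nabla_\theta J_d(\theta)\|^2 + O(\eta^2)$, which the paper omits entirely), while the converse half requires an explicit alignment/expressivity assumption, which the paper instead buries in the informal condition that the score norm is ``bounded and varies slowly across $d$.'' In short: same skeleton, but your version corrects the direction of the key inequality and makes explicit the extra hypothesis that the paper's heuristic argument silently needs; what the paper's looser treatment buys is only brevity, which is arguably tolerable given that the theorem statement itself is informal (``serves as a proxy'').
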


The proof can be found in \autoref{sec:proof_learnability}.
Intuitively, if training on a distribution results in a larger expected advantage magnitude, then that distribution is considered more learnable. The advantage function measures the deviation between an action’s predicted value and its actual return; a large advantage—either positive or negative—indicates that the model’s current policy is still far from optimal on those samples but has a large potential to improve. A small advantage magnitude does not necessarily imply mastery—it may also occur when a task is too difficult or noisy for the model to learn from effectively, resulting in weak or unstable learning signals. To capture this deviation in both directions, we take the absolute value of the advantage. Without this, positive and negative advantages within a batch may cancel out, masking the true extent of the model’s uncertainty or suboptimality. By averaging the absolute advantage over multiple sampled outputs and prompts, we obtain a robust estimate of how much learning signal remains in a given distribution.
This expected absolute advantage thus acts as a practical proxy for distribution-level learnability: it reflects how much the model can benefit from training on that distribution. It also has the strength of being lightweight to compute in RL pipelines, as advantage estimates are already generated during rollout.

\vspace{-0.1cm}
\subsection{Formalizing Distribution-Level Curriculum Learning as Multi-armed Bandit}
\vspace{-0.1cm}

We aim to design a curriculum learning strategy that dynamically allocates training focus across multiple data distributions to maximize overall model improvement. Let \(\mathcal{D} = \{d_1, \dots, d_N\}\) be a set of data distributions. At each training step, we sample a batch of examples \(\mathcal{B}_t\) by drawing prompts from these distributions according to a learnable sampling policy, and use the batch to update model parameters \(\theta\) via reinforcement learning. The goal is to assign higher sampling probabilities to distributions that offer greater learning potential, thereby maximizing cumulative capability gain.

As motivated in~\autoref{th:learnability}, we quantify the learning potential of a distribution \(d\) via its expected absolute advantage, defined as \(L(d) = \mathbb{E}_{x \sim d} \left[ \mathbb{E}_{o \sim \pi_\theta(\cdot|x)} \left[ |\hat{A}(o)| \right] \right]\). Our objective is to dynamically adjust the sampling distribution over \(\mathcal{D}\) such that, over the training horizon \(T\), we approximately maximize the total expected learnability gain \(\sum_{t=1}^T \mathbb{E}_{d \sim P_t} [L(d)]\), where \(P_t\) is the sampling distribution at step \(t\).
This setup resembles a multi-armed bandit (MAB) problem, where each distribution acts as an arm and its reward corresponds to its learnability. 
In this setting, the central challenge is to estimate and balance each distribution’s potential: exploiting those with high observed advantage while still exploring under-sampled ones that may offer long-term benefit. To this end, we adopt the classic Upper Confidence Bound (UCB) principle~\citep{auer2002finite}, which provides theoretical guarantees for balancing exploration and exploitation in bandit problems. Specifically, UCB-based algorithms achieve sublinear regret compared to the optimal fixed-arm strategy, and we show in~\autoref{sec:theory_ucb} that applying UCB on empirical advantage statistics yields a near-optimal schedule under mild assumptions.
To allow smoother allocation of sampling probabilities without hard cutoffs and reducing variance in learning, 
we adopt a soft-selection mechanism: instead of choosing one distribution at each step, we compute a UCB score for every distribution and normalize the scores with a softmax function to obtain a sampling distribution.
This soft-selection formulation preserves the spirit of UCB—higher scoring distributions are sampled more—but enables partial exploration of all arms, and it is easier to integrate into LLM training pipelines.
The resulting sampling distribution provides a convex mixture over data sources, where each distribution \(d_j\) is selected with probability.
Each training batch is then composed by drawing examples from multiple distributions in proportion to their scores.
To estimate learnability in practice, we maintain a sliding window \(\mathcal{A}_{d_j}^w\) of recent absolute advantages for each distribution \(d_j\), and define its empirical reward as the mean absolute advantage: \(\hat{L}(d_j) = \frac{1}{|\mathcal{A}_{d_j}^w|} \sum_{a \in \mathcal{A}_{d_j}^w} |a|\). We also track the total number of samples drawn from each distribution \(n_{d_j}\), and the global sample count \(n_{\text{total}} = \sum_j n_{d_j}\).
The UCB score for each distribution is:
\begin{equation}\label{eq:ucb}
\text{UCB}(d_j) = \hat{L}(d_j) + \sqrt{\frac{2 \log (n_{\text{total}} + 1)}{n_{d_j} + 1}}
\end{equation}
The first term encourages exploitation of distributions with high observed advantages, while the second term ensures sufficient exploration of rarely sampled distributions.
To obtain the final sampling weights, we apply a softmax over the UCB scores. Specifically, the probability of selecting distribution \(d_j\) is computed as:
\(P(d_j) = \frac{\exp(\text{UCB}(d_j) / \tau)}{\sum_{j=1}^N \exp(\text{UCB}(d_j) / \tau)}\),
where \(\tau > 0\) is a temperature hyperparameter that controls the sharpness of the sampling distribution. A lower \(\tau\) results in more peaked selection around the top-scoring distributions, while a higher \(\tau\) leads to a smoother, more exploratory curriculum.
This bandit-based formulation provides a lightweight, adaptive, and reward-sensitive curriculum learning mechanism. It balances the need to focus on learnable distributions while avoiding premature neglect of underexplored ones. In the next section, we present the complete algorithmic implementation of \sys{}, including its integration with rollout procedures and online statistics tracking.

\begin{algorithm}[t]
\caption{Automated Distribution-Level Curriculum Learning with UCB Sampling}
\label{alg:cl}
{\bf Input:} Dataset \(\mathcal{D} = \{d_1, \dots, d_N\}\); pre-trained model parameters \(\theta\)\\
{\bf Output:} Post-trained model parameters \(\theta\)
\begin{algorithmic}[1]
\Function{\sys}{$\mathcal{D}$, $\theta$}
    \State \(\triangleright\) Initialize distribution-level statistics
    \For{each \(d_j \in \mathcal{D}\)}
        \State \(\mathcal{A}_{d_j}^w \gets \text{[ ]}\) \Comment{Sliding window for absolute advantages}
        \State \(n_{d_j} \gets 0\) \Comment{Total samples seen from \(d_j\)}
        \State \(P(d_j) \gets \frac{1}{N}\) \Comment{Equal initial weights}
    \EndFor

    \For{training step \(t = 1, 2, \dots, T\)}
        \State Sample batch \(\mathcal{B}_t\) from \(\mathcal{D}\) according to \(P(d_j)\)
        \State Compute advantages \(\hat{A}(o)\) for all \(o \in \mathcal{B}_t\) via model rollout
        
        \For{each \(d_j\) with samples in \(\mathcal{B}_t\)}
            \State \(n_{d_j} \gets n_{d_j} + |\mathcal{B}_{t,d_j}|\) \Comment{Update sample count; \(\mathcal{B}_{t,d_j}\): subset of batch from \(d_j\)}

            \State \(\mathcal{A}_{d_j}^w \gets \mathcal{A}_{d_j}^w \cup \left\{ |\hat{A}(o)| \mid x \in \mathcal{B}_{t,d_j},\, o \sim \pi_\theta(\cdot|x) \right\}\) \Comment{Append new advantages from \(d_j\)}
            \State $\mathcal{A}_{d_j}^w \gets \mathcal{A}_{d_j}^w[-k:]$ \Comment{$k$: Window Size; Keep last $k$ elements}
        \EndFor
        
        \State \(\triangleright\) Compute UCB scores for each distribution
        \State \(n_{\text{total}} \gets \sum_{d_j \in \mathcal{D}} n_{d_j}\)
        \For{each \(d_j \in \mathcal{D}\)}
            \State \(\hat{L}(d_j) \gets \frac{1}{|\mathcal{A}_{d_j}^w|} \sum_{a \in \mathcal{A}_{d_j}^w} a\) \Comment{Mean of absolute advantages}
            \State \(\text{UCB}(d_j) \gets \hat{L}(d_j) + \sqrt{\frac{2 \log(n_{\text{total}} + 1)}{n_{d_j} + 1}}\) \Comment{\autoref{eq:ucb}}
        \EndFor
        
        \State \(\triangleright\) Update sampling distribution
        \State \(P(d_j) \gets \frac{\exp(\text{UCB}(d_j)/\tau)}{\sum_{j=1}^{N} \exp(\text{UCB}(d_j)/\tau)} \quad \forall d_j \in \mathcal{D}\) \Comment{\(\tau\): temperature}
        
        \State Update \(\theta\) using \(\mathcal{B}_t\) with an RL algorithm (e.g., GRPO)
    \EndFor
    \State \Return \(\theta\)
\EndFunction
\end{algorithmic}
\end{algorithm}

\vspace{-0.2cm}
\subsection{Algorithm}\label{sec:algorithm}
\vspace{-0.2cm}

The detailed curriculum learning procedure is illustrated in \autoref{alg:cl}. The algorithm takes as input a dataset \(\mathcal{D} = \{d_1, \dots, d_N\}\) composed of multiple distributions and returns the optimized model parameters \(\theta\) through a reinforcement learning loop.
In lines 3–6, we initialize per-distribution statistics: each distribution \(d_j \in \mathcal{D}\) is associated with an empty sliding window \(\mathcal{A}_{d_j}^w\) to store recent absolute advantages, a counter \(n_{d_j}\) for tracking the number of samples drawn from \(d_j\), and an initial sampling probability \(P(d_j) = \frac{1}{N}\) indicating uniform sampling.
At each training step \(t\) (line 8), a batch \(\mathcal{B}_t\) is sampled according to the current distribution weights \(P(d_j)\). Advantages \(\hat{A}(o)\) are then computed via model rollouts for each sampled output \(o \in \mathcal{B}_t\) (line 9). For every distribution \(d_j\) that contributes samples in the current batch, we update its sample count \(n_{d_j}\) (line 11), append the corresponding advantages to its sliding window \(\mathcal{A}_{d_j}^w\) (line 12), and truncate the window to retain only the most recent \(k\) entries (300 by default) in line 13. 
This ensures that our estimate of per-distribution learnability remains up-to-date and robust to noise.
In lines 15–18, we compute the Upper Confidence Bound (UCB) score \(\text{UCB}(d_j)\) for each distribution. The score consists of two terms: the empirical mean absolute advantage \(\hat{L}(d_j)\) over the sliding window \(\mathcal{A}_{d_j}^w\), and an exploration bonus inversely proportional to the square root of the number of samples \(n_{d_j}\). This balances prioritization of distributions that are either highly learnable or underexplored.
In line 20, the sampling probabilities \(P(d_j)\) are updated by applying a softmax over the UCB scores with a temperature parameter \(\tau\) (0.1 by default). 
Lower values of \(\tau\) result in sharper distributions that concentrate more heavily on top-ranked distributions, while higher \(\tau\) values induce a smoother, more exploratory curriculum.
Finally, in line 21, the model parameters \(\theta\) are updated using the current batch \(\mathcal{B}_t\) with a reinforcement learning algorithm such as GRPO. After \(T\) steps, the algorithm returns the post-trained model \(\theta\), which has been adaptively guided to learn from the most informative distributions.

\vspace{-0.2cm}
\section{Experiments and Results}
\label{sec:eval}
\vspace{-0.2cm}

In this section, we first introduce our experiments setup including used models datasets and more implementation details. We then demonstrate the results for the effectiveness of our method \sys. 
More discussion about the comparison to static heuristic curriculum~\cite{xie2025logic,team2025kimi} can be found in \autoref{sec:comparison}.

\vspace{-0.1cm}
\subsection{Experiments Setup}
\label{sec:setup}
\vspace{-0.1cm}

\noindent
\textbf{RL Algorithm and LLM Models.}
We use GRPO~\cite{shao2024deepseekmath} as the underlying RL algorithm in our experiments, which is commonly used in capability-oriented LLM post-training~\cite{guo2025deepseek}.
We use Qwen2.5-7B-Instruct-1M~\cite{yang2024qwen2} and Qwen2.5-3B-Instruct~\cite{yang2024qwen2} in our experiments.

\noindent
\textbf{Datasets and Settings.} 
Multiple datasets are used in our experiments, including
Knights and Knaves (K\&K) puzzle dataset~\cite{xie2024memorization}, RuleTaker~\cite{clark2020transformers}, ProofWriter~\cite{tafjord2020proofwriter},
AR-LSAT~\cite{zhong2021ar}, LogiQA~\cite{liu2020logiqa}, LogicNLI~\cite{tian2021diagnosing}, LongICLBench~\cite{li2024long}, GSM-8K~\cite{cobbe2021gsm8k}, and AIME 1983-2024~\cite{di_zhang_2025}. 
In our experiments, we consider three different settings. The prompt template used in shown in \autoref{fig:prompt_example} in the Appendix.

\emph{Setting 1: Post-training on K\&K puzzles with varying character numbers.}
The Knights and Knaves (K\&K) dataset~\cite{xie2024memorization} contains procedurally generated logic puzzles where each character is either a knight (always truthful) or a knave (always lying), and the goal is to infer each character’s identity. The dataset supports fine-grained difficulty control by adjusting the number of characters. We generate puzzles with 3 to 14 characters, treating each character count as a separate distribution—yielding \emph{12 distinct distributions}. Each distribution includes 900 training and 100 test samples. We post-train Qwen2.5-7B-Instruct-1M on the combined dataset across all distributions.

\emph{Setting 2: Post-training on diverse logic reasoning distributions.}
We perform post-training using a mixture of logic reasoning datasets, including RuleTaker~\cite{clark2020transformers}, ProofWriter~\cite{tafjord2020proofwriter}, AR-LSAT~\cite{zhong2021ar}, LogiQA~\cite{liu2020logiqa}, LogicNLI~\cite{tian2021diagnosing}, LongICLBench Geomotion~\cite{li2024long}, and Knights and Knaves (K\&K)~\cite{xie2024memorization}. For RuleTaker, ProofWriter, and K\&K, we further partition the data distributions by complexity levels: RuleTaker by 2, 3, and 5 required reasoning steps; ProofWriter by 3, 4, and 5 required reasoning steps; and K\&K by the number of characters (3–7). In total, we construct 15 logic distributions, each containing 400 training samples. We use Qwen2.5-7B-Instruct-1M for this setting.

\emph{Setting 3: Post-training on diverse math reasoning distributions.}
We also explore post-training on diverse math data. For AIME, we split the data into four distributions based on competition years—1983–1993, 1994–2004, 2005–2015, and 2016–2024—since problem styles evolve significantly over time. We also include GSM-8K as a complementary math dataset. This results in five math distributions in total, with 7473 (GSM-8K), 124, 194, 283, and 238 training samples, respectively. We use Qwen2.5-3B-Instruct for this setting.

\noindent
\textbf{Reward Implementation.} We adopt the rule-based reward mechanism~\citet{shao2024deepseekmath} to provide stable and hack-resistant training signals during RL-based post-training and follow the detailed reward implemetation in Logic-RL~\cite{xie2025logic}. Specifically, each model response is expected to follow a structured format with the reasoning process enclosed in \texttt{<think>} tags and the final answer enclosed in \texttt{<answer>} tags. The reward system consists of two components:

\begin{itemize}
    \item \emph{Format Reward.} A binary reward based on whether the output strictly adheres to the expected format. If the model includes exactly one well-formed \texttt{<think>} and one \texttt{<answer>} section in the correct order, it receives a reward of \(+1\); otherwise, it receives a penalty of \(-1\).
    
    \item \emph{Answer Reward.} We evaluate the correctness of the final answer. If the predicted identities fully match the ground truth, the model receives a reward of \(+2\); if the answer is incorrect, \(-1.5\); and if the answer is missing or unparsable, \(-2\).
\end{itemize}

\begin{table*}[htbp]
\centering
\scriptsize
\setlength\tabcolsep{2pt}
\begin{minipage}[t]{0.52\textwidth}
\centering
\begin{tabular}{@{}ccccc@{}}
\toprule
Data Distribution      &  & without DUMP &  & with DUMP \\ \midrule
RuleTaker 2 Steps      &  & 0.79         &  & \textbf{0.79}      \\
RuleTaker 3 Steps      &  & 0.76         &  & \textbf{1.02}      \\
RuleTaker 5 Steps      &  & 0.56         &  & \textbf{0.98}      \\
ProofWriter 3 Steps    &  & \textbf{1.18}         &  & 1.09      \\
ProofWriter 4 Steps    &  & 0.97         &  & \textbf{1.09}      \\
ProofWriter 5 Steps    &  & \textbf{1.24}         &  & 1.05      \\
AR-LSAT                &  & -0.70        &  & \textbf{-0.52}     \\
LogiQA                 &  & \textbf{1.94}         &  & 1.70      \\
LogicNLI               &  & -0.29        &  & \textbf{-0.23}     \\
LongICLBench Geomotion &  & \textbf{0.54}         &  & 0.25      \\
K \& K 3 Characters    &  & 2.00         &  & \textbf{2.00}      \\
K \& K 4 Characters    &  & 1.54         &  & \textbf{1.76}      \\
K \& K 5 Characters    &  & 1.53         &  & \textbf{1.84}      \\
K \& K 6 Characters    &  & 0.83         &  & \textbf{1.42}      \\
K \& K 7 Characters    &  & 0.56         &  & \textbf{1.02}      \\ \midrule
Average                &  & 0.90         &  & \textbf{1.17}      \\ \bottomrule
\end{tabular}
\caption{Test Answer Reward (see \autoref{sec:setup}) on diverse logic reasoning distributions (Setting 2). The model used here is Qwen2.5-7B-Instruct-1M.}
\label{tab:combined_logic}
\end{minipage}
\hfill
\begin{minipage}[t]{0.45\textwidth}
\centering
\begin{tabular}{@{}ccccc@{}}
\toprule
Data Distribution &  & without DUMP &  & with DUMP \\ \midrule
GSM-8K            &  & \textbf{1.50}         &  & 1.47      \\
AIME 1983-1993    &  & -0.76        &  & \textbf{-0.39}     \\
AIME 1994-2004    &  & -1.50        &  & \textbf{-1.02}     \\
AIME 2005-2015    &  & -0.94        &  & \textbf{-0.94}     \\
AIME 2016-2024    &  & -1.27        &  & \textbf{-1.27}     \\ \midrule
Average           &  & -0.59        &  & \textbf{-0.43}     \\ \bottomrule
\end{tabular}
\caption{Test Answer Reward (see \autoref{sec:setup}) on diverse math reasoning distributions (Setting 3). The model used here is Qwen2.5-3B-Instruct.}
\label{tab:math}
\end{minipage}
\vspace{-0.6cm}
\end{table*}

\noindent
\textbf{Other Implementation Details.} All experiments are conducted on servers equipped with 8 Nvidia A100 GPUs. Our method is implemented with VeRL~\cite{sheng2024hybridflow} LLM Reinforcement Learning framework. We use GRPO~\cite{shao2024deepseekmath} as the training algorithm and follow standard practice for actor rollout and optimization. The actor learning rate is set to \(1e{-6}\), training batch size is set to 128, and the PPO mini-batch size is 32. KL divergence regularization is applied to encourage alignment with the reference policy, with a KL loss coefficient of 0.001. Each rollout batch contains 16 responses. 
If not specified, we allow for a maximum response length of 20480 and 4096 tokens during training for Qwen2.5-7B-Instruct-1M and Qwen2.5-3B-Instruct, respectively.
The window size \(k\) and the temperature \(\tau\) in our curriculum learning framework is set to 300 and 0.1, respectively.

\begin{figure}[t]
    \begin{subfigure}[t]{0.33\columnwidth}
        \centering
        \includegraphics[width=\columnwidth]{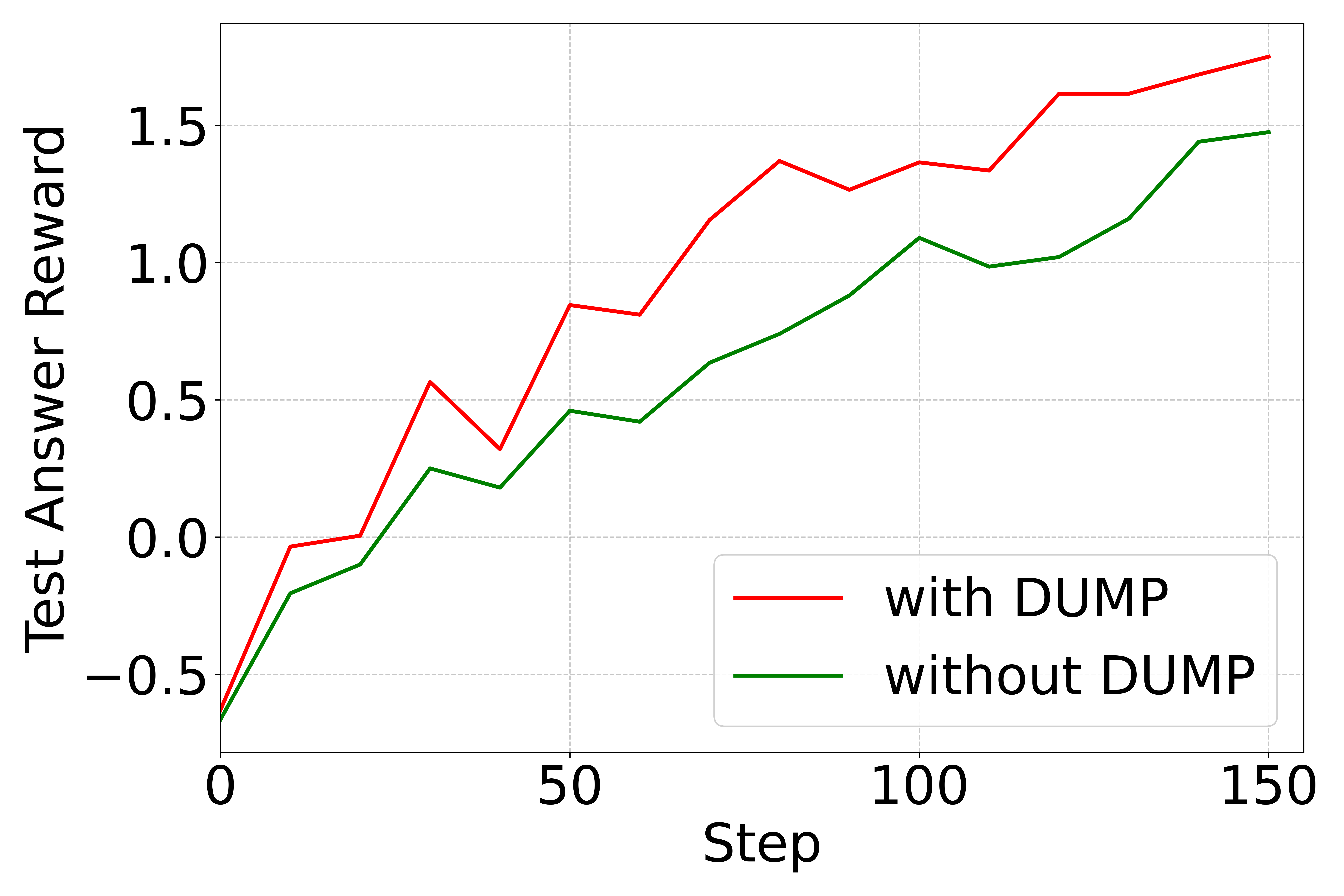}
        \caption{3 Characters}
    \end{subfigure}
    \begin{subfigure}[t]{0.33\columnwidth}
        \centering
        \includegraphics[width=\columnwidth]{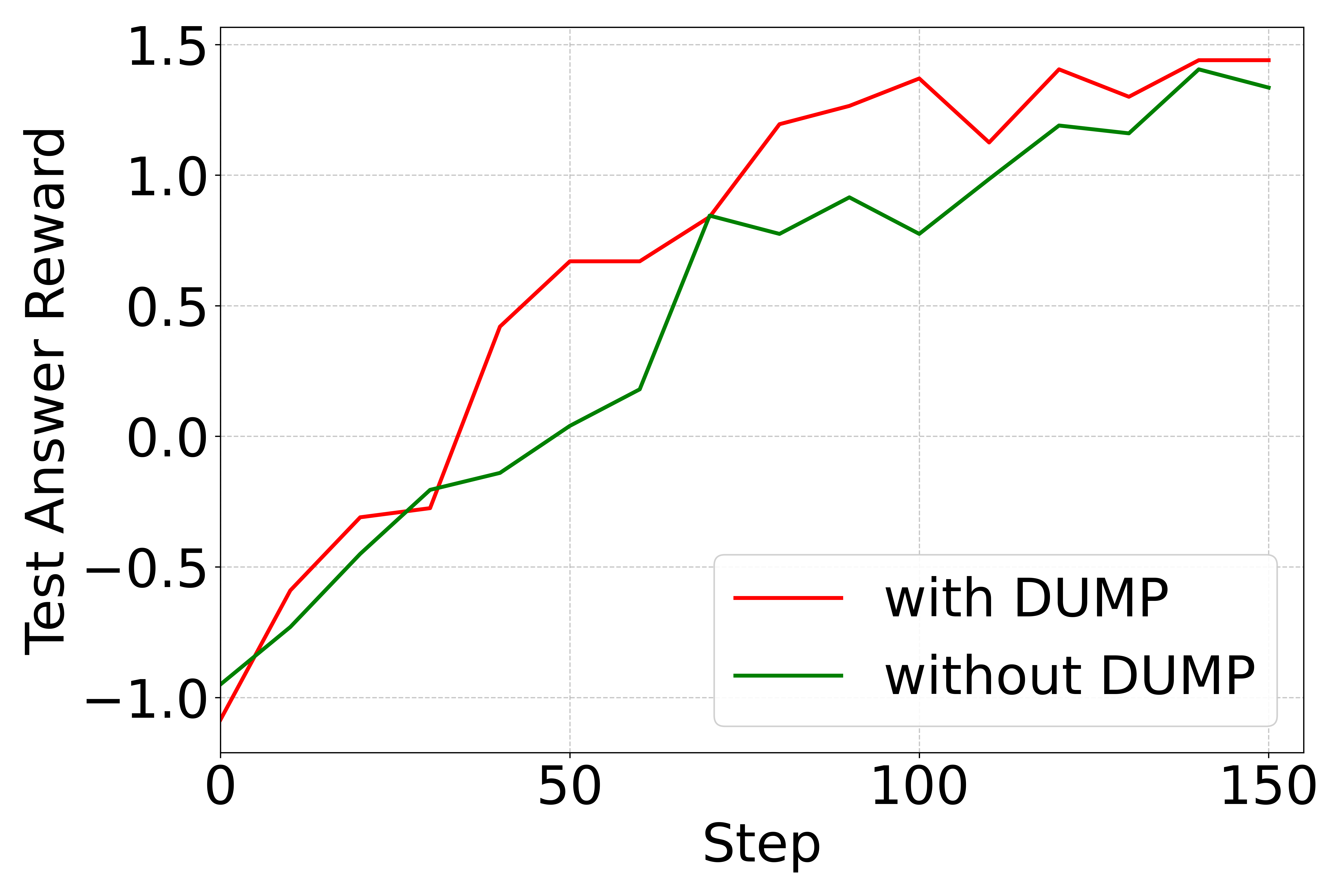}
        \caption{4 Characters}
    \end{subfigure}
    \begin{subfigure}[t]{0.33\columnwidth}
        \centering
        \includegraphics[width=\columnwidth]{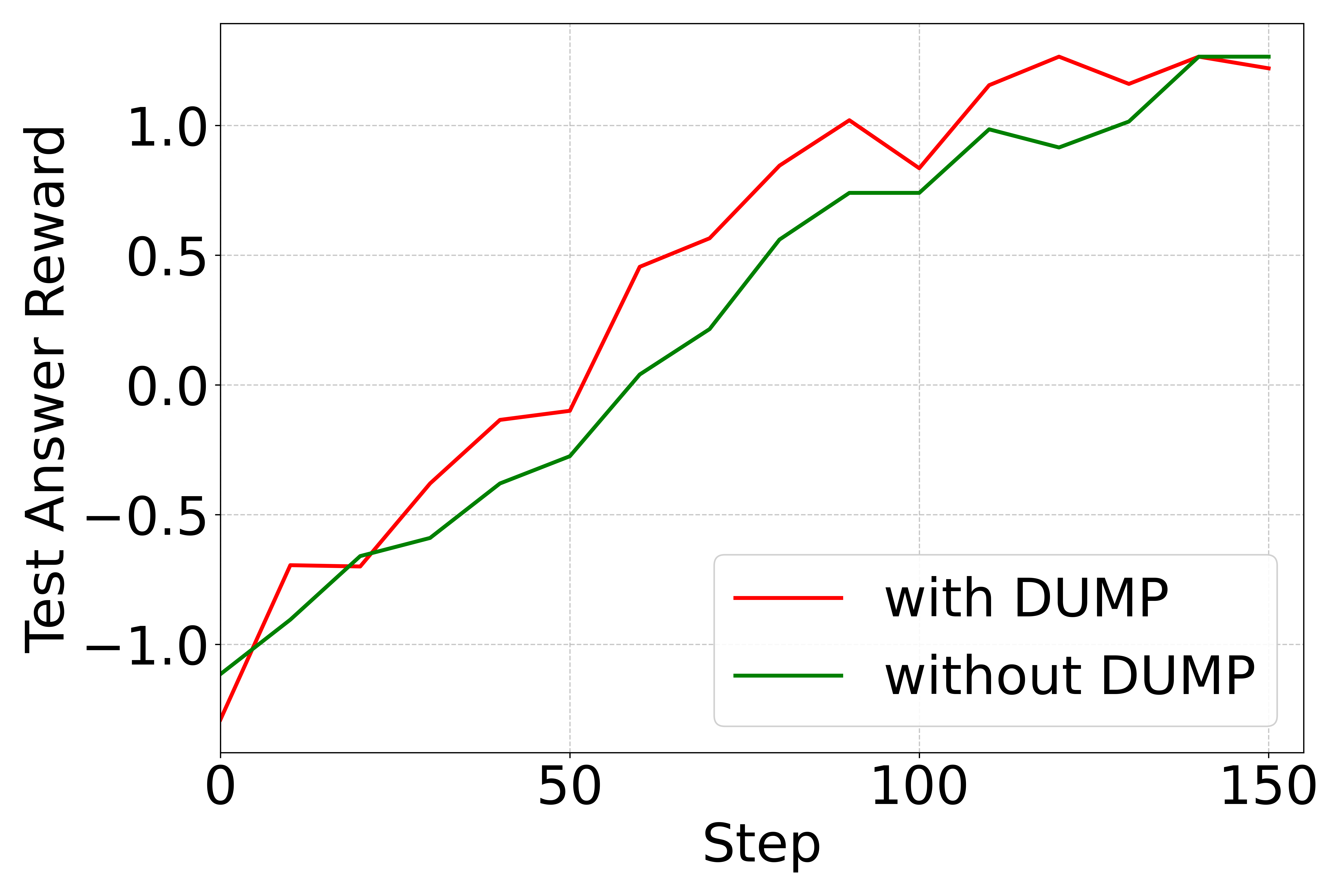}
        \caption{5 Characters}
    \end{subfigure}
    
    \begin{subfigure}[t]{0.33\columnwidth}
        \centering
        \includegraphics[width=\columnwidth]{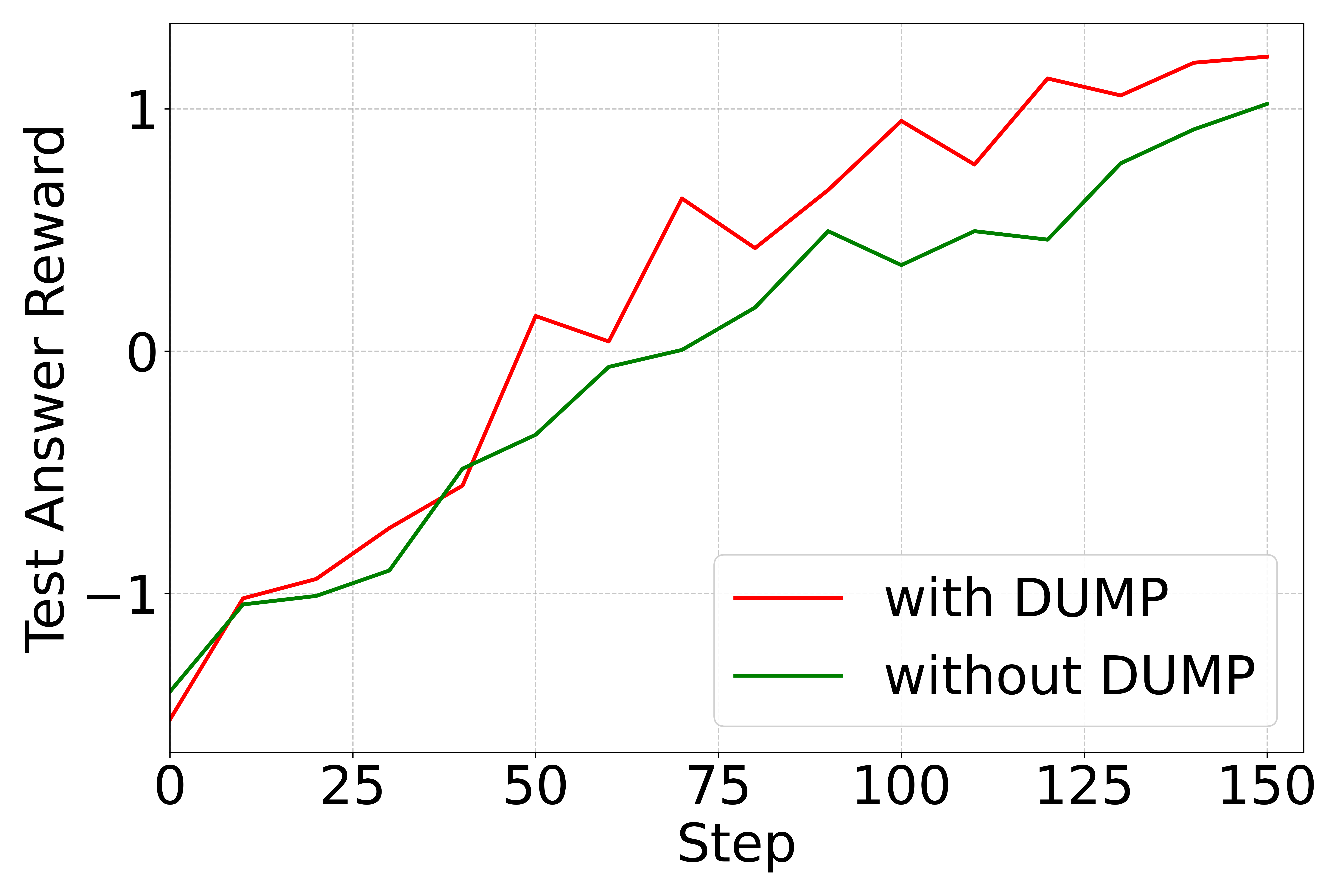}
        \caption{6 Characters}
    \end{subfigure}
    \begin{subfigure}[t]{0.33\columnwidth}
        \centering
        \includegraphics[width=\columnwidth]{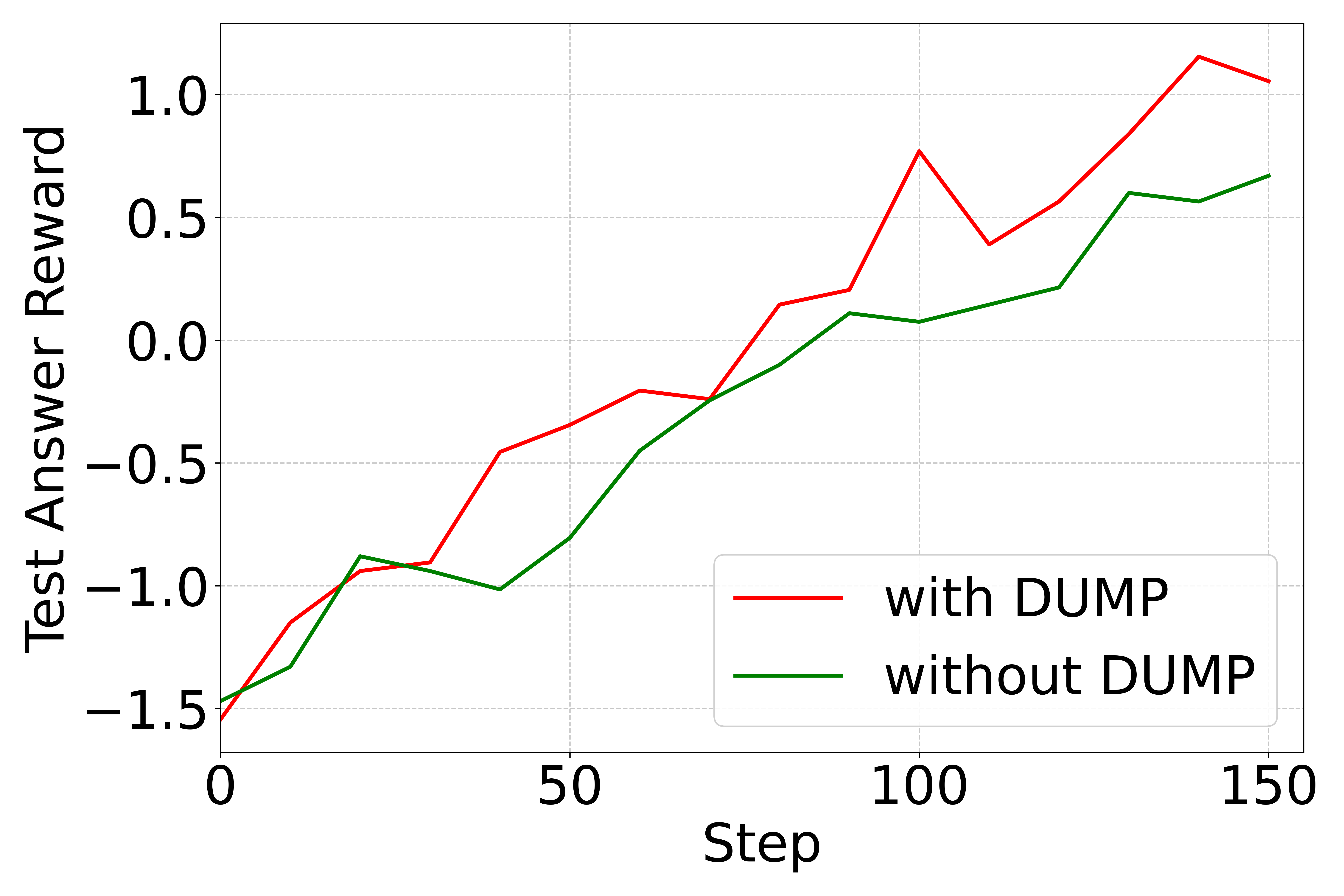}
        \caption{7 Characters}
    \end{subfigure}
    \begin{subfigure}[t]{0.33\columnwidth}
        \centering
        \includegraphics[width=\columnwidth]{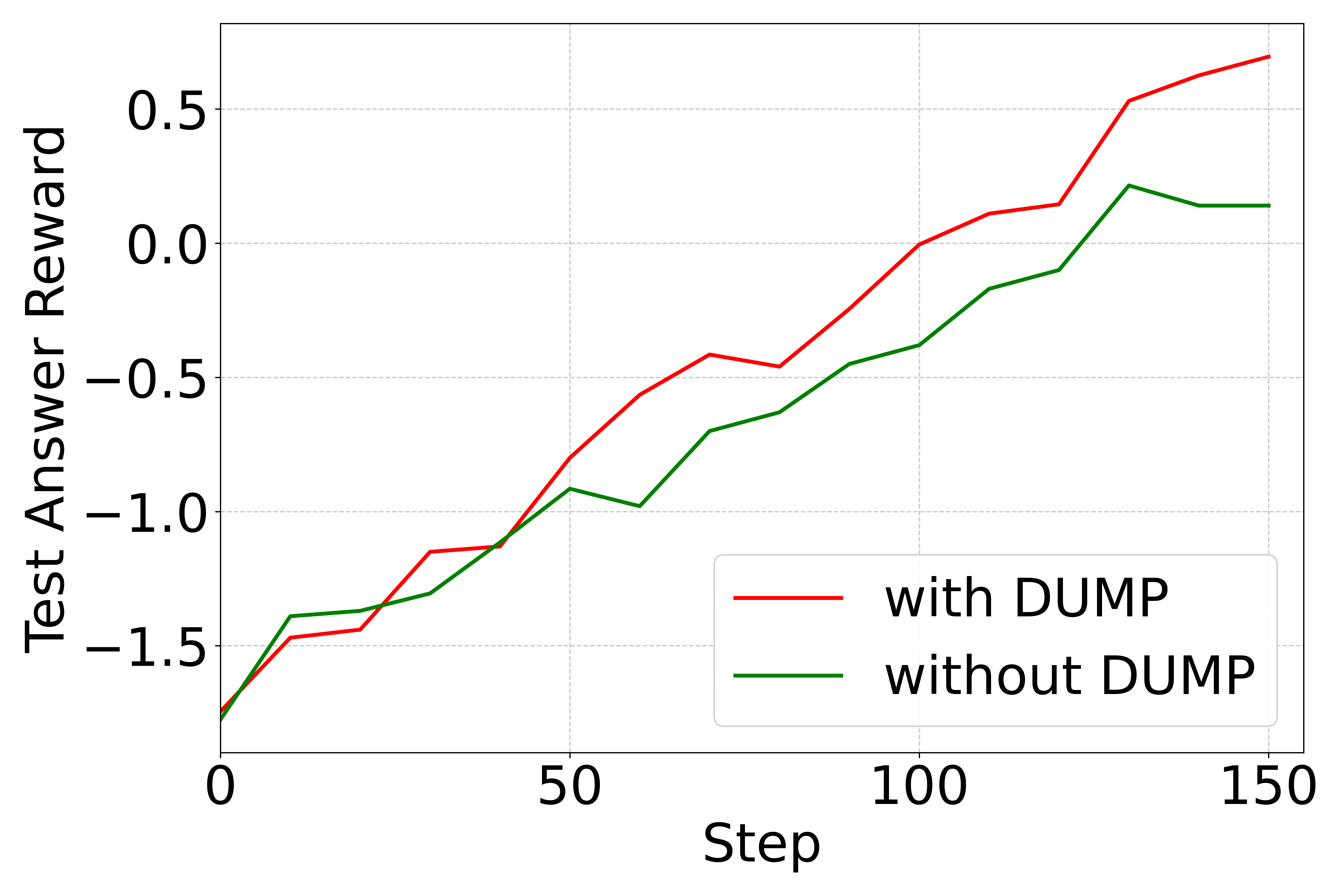}
        \caption{8 Characters}
    \end{subfigure}
    
    \begin{subfigure}[t]{0.33\columnwidth}
        \centering
        \includegraphics[width=\columnwidth]{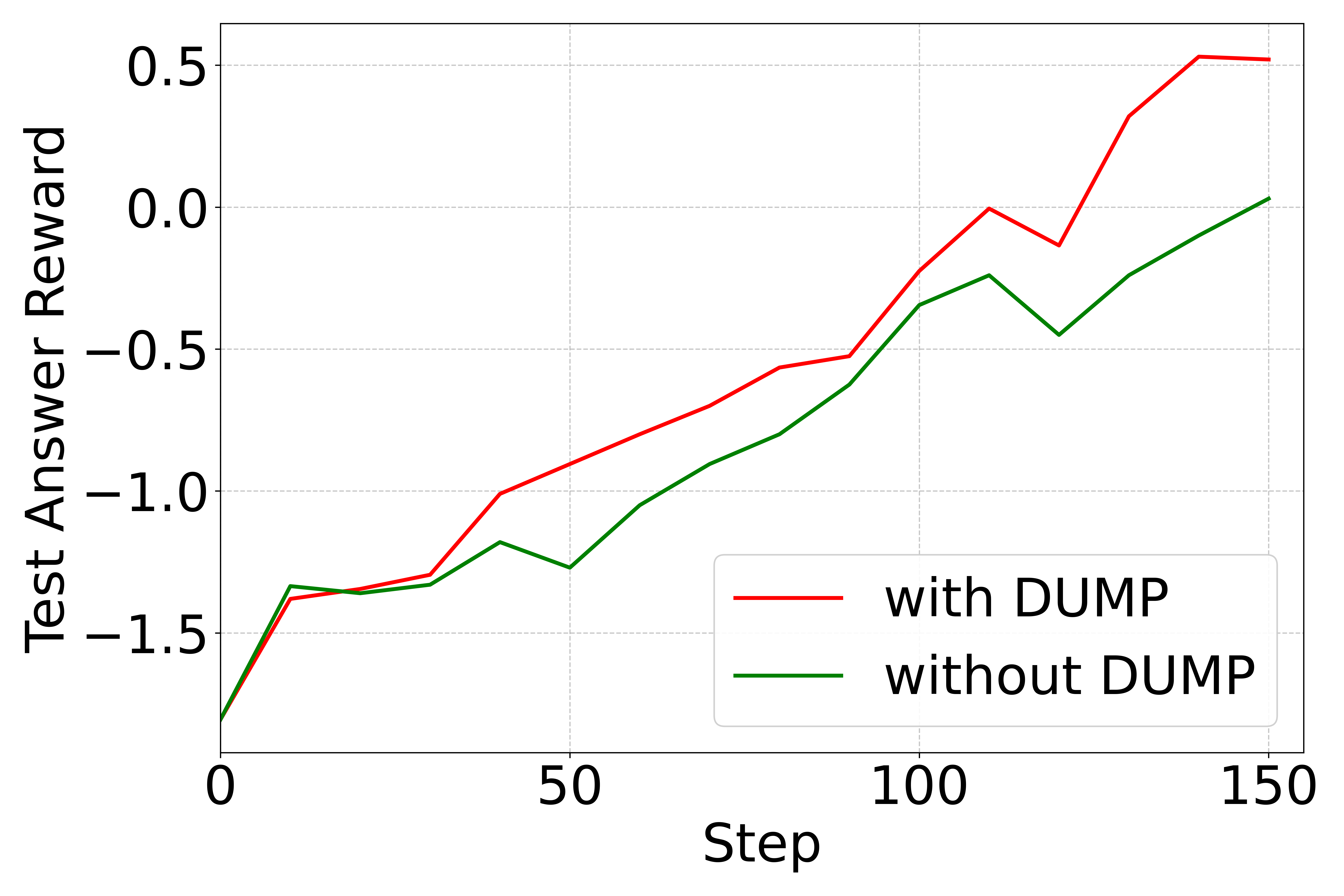}
        \caption{9 Characters}
    \end{subfigure}
    \begin{subfigure}[t]{0.33\columnwidth}
        \centering
        \includegraphics[width=\columnwidth]{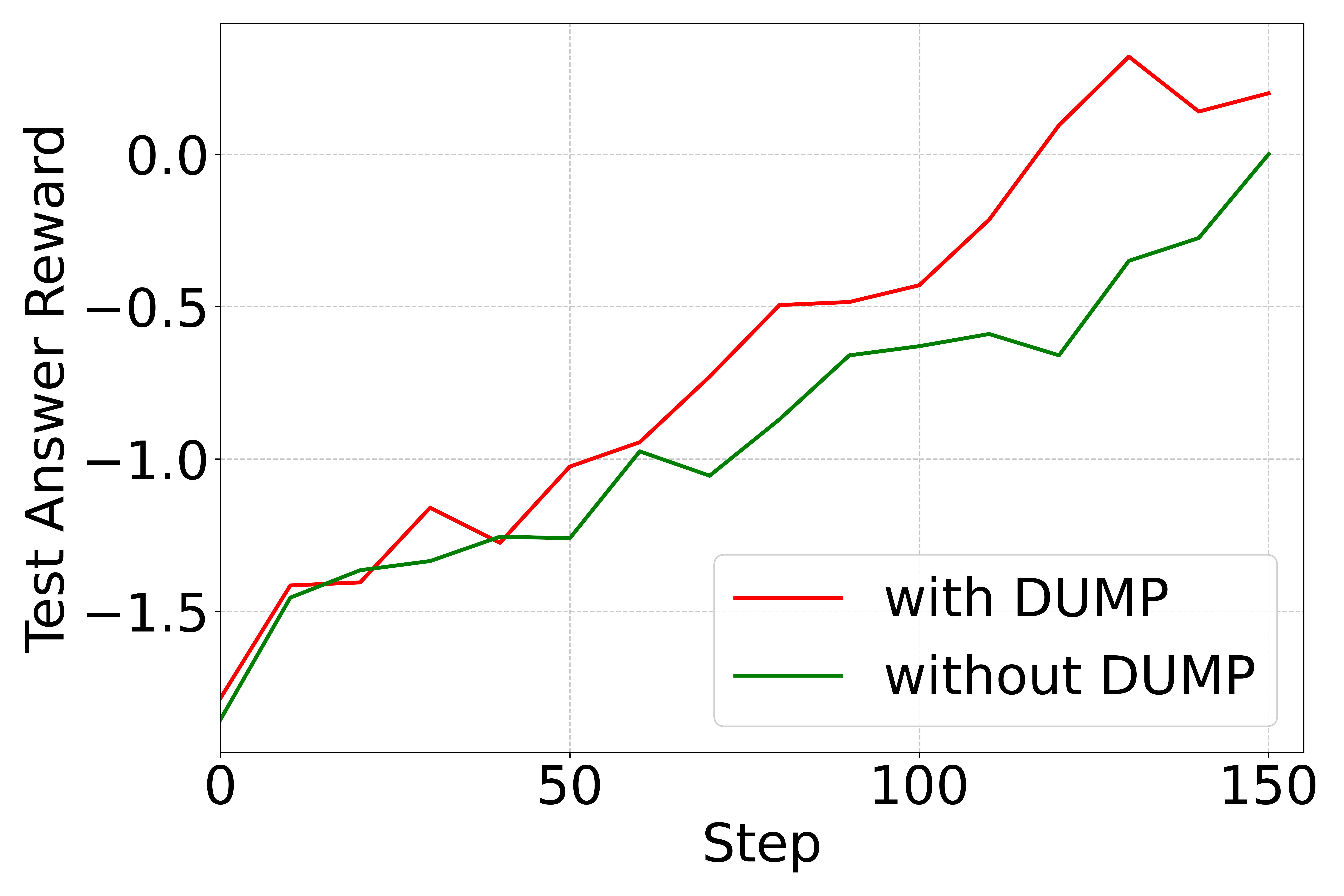}
        \caption{10 Characters}
    \end{subfigure}
    \begin{subfigure}[t]{0.33\columnwidth}
        \centering
        \includegraphics[width=\columnwidth]{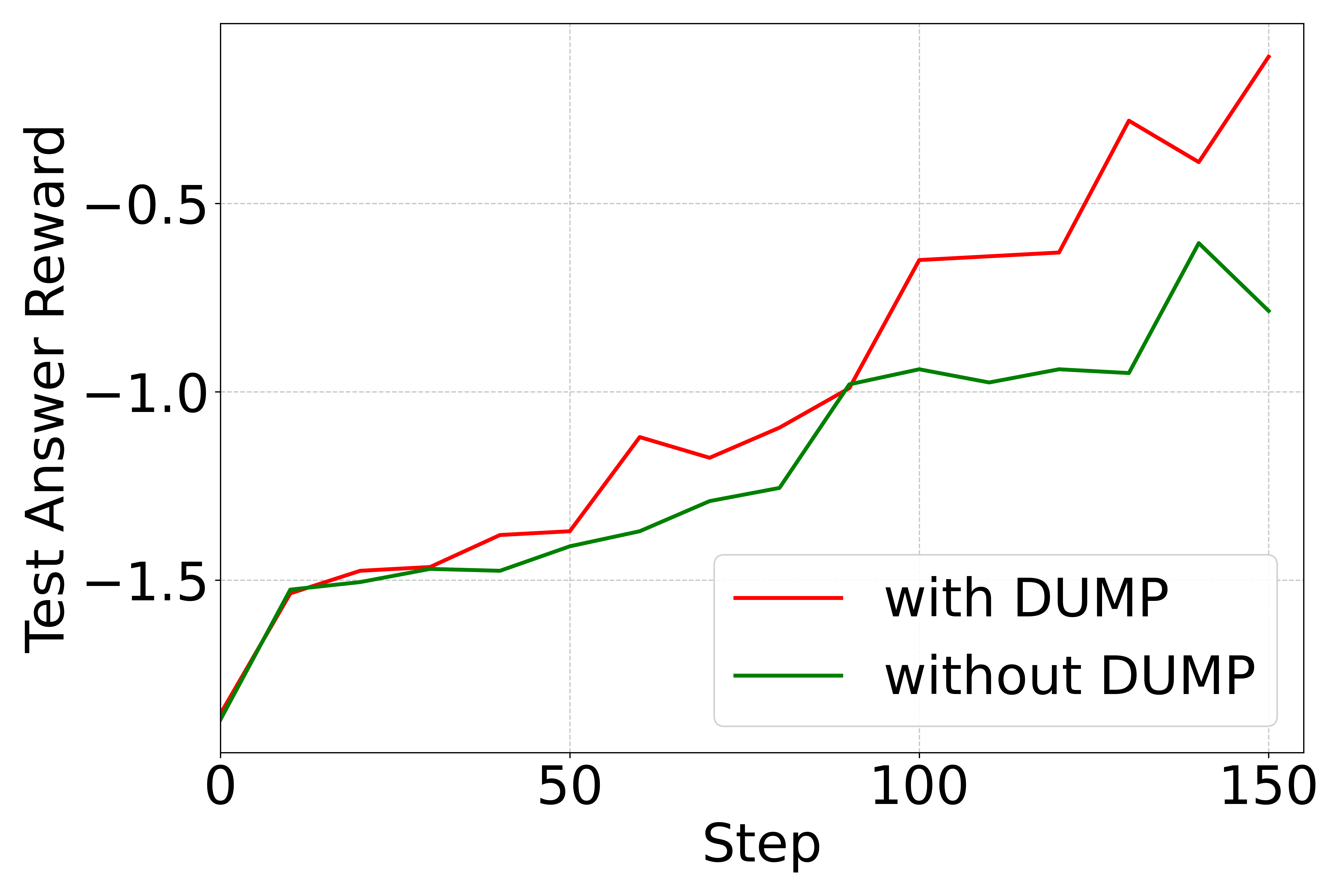}
        \caption{11 Characters}
    \end{subfigure}
    
    \begin{subfigure}[t]{0.33\columnwidth}
        \centering
        \includegraphics[width=\columnwidth]{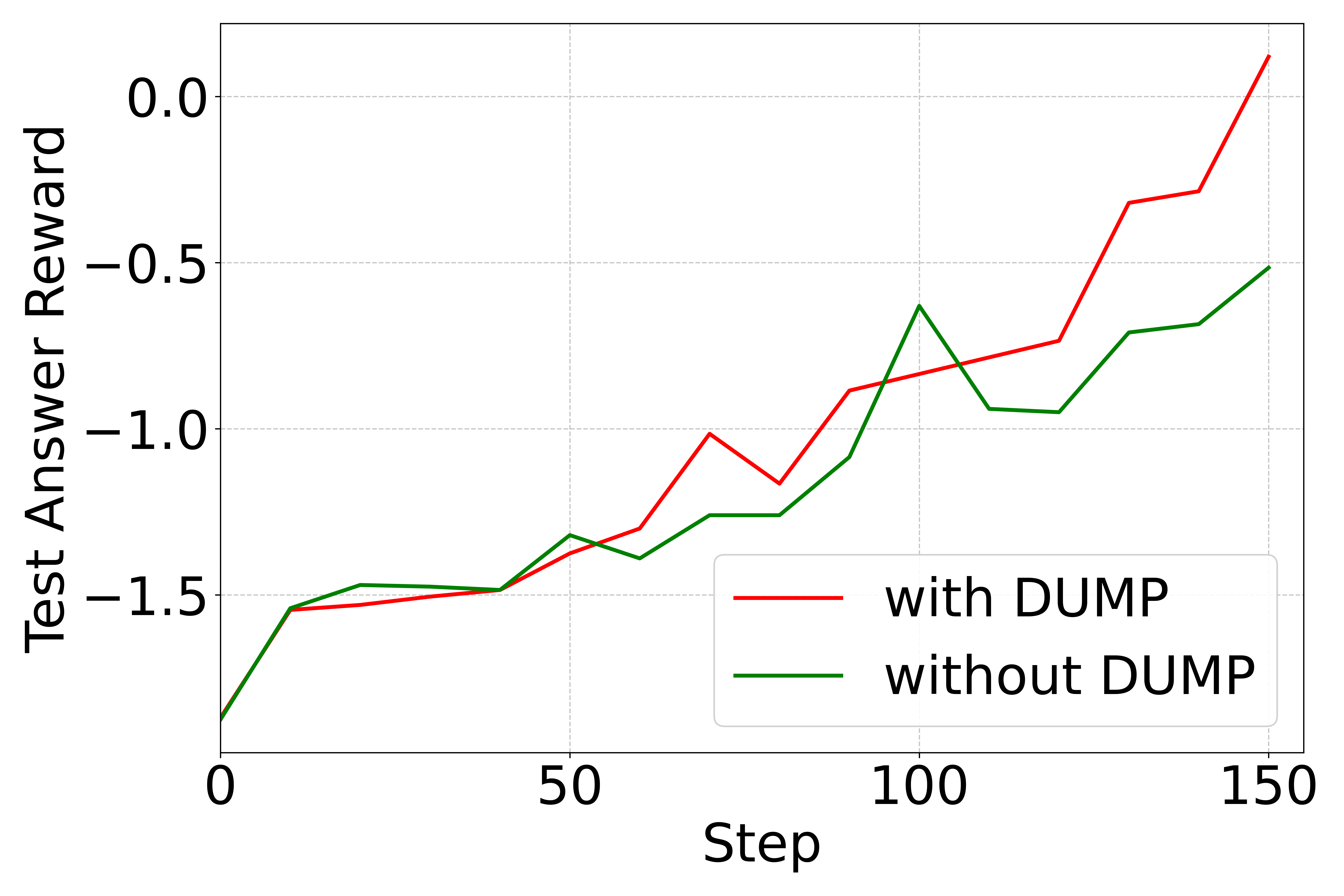}
        \caption{12 Characters}
    \end{subfigure}
    \begin{subfigure}[t]{0.33\columnwidth}
        \centering
        \includegraphics[width=\columnwidth]{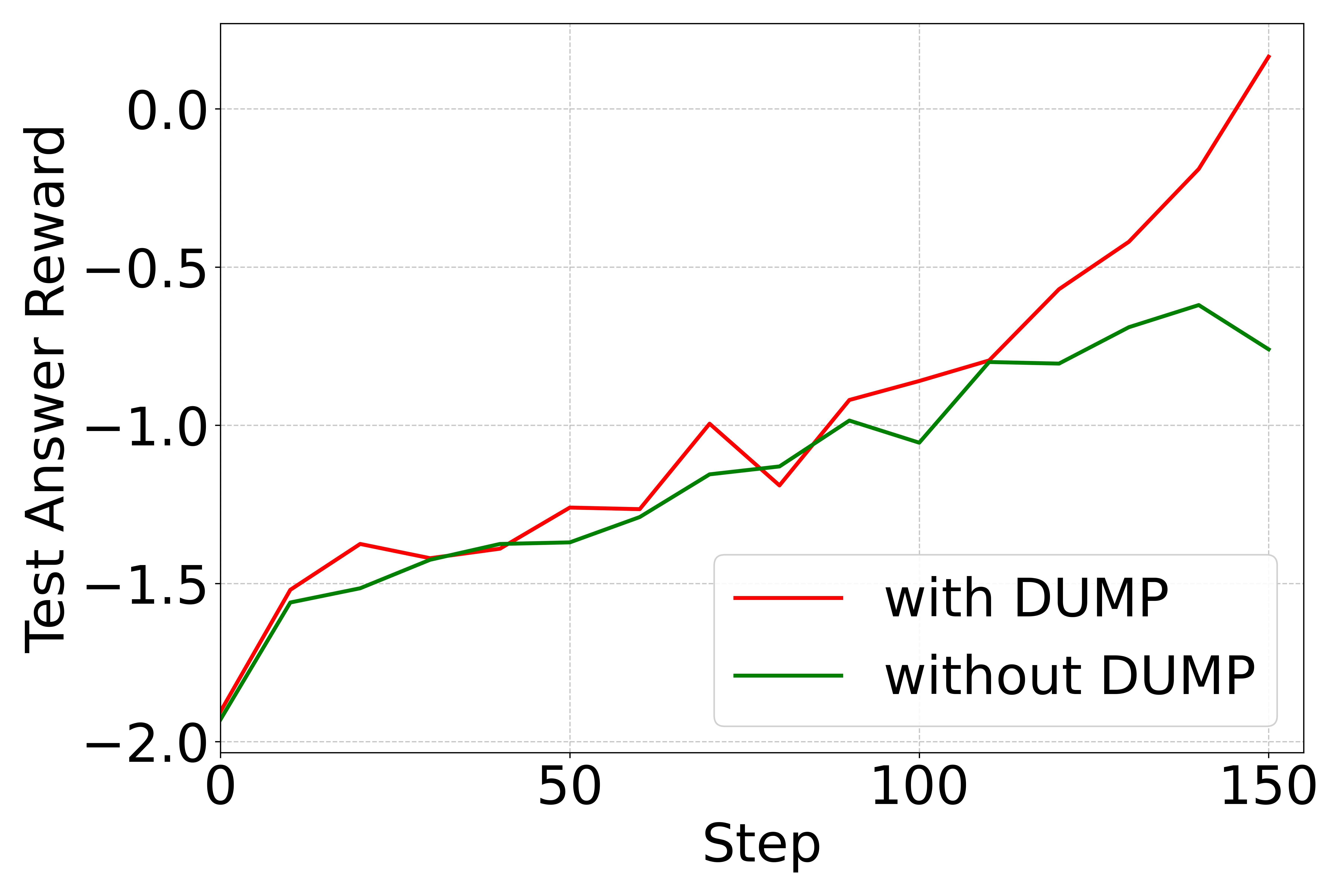}
        \caption{13 Characters}
    \end{subfigure}
    \begin{subfigure}[t]{0.33\columnwidth}
        \centering
        \includegraphics[width=\columnwidth]{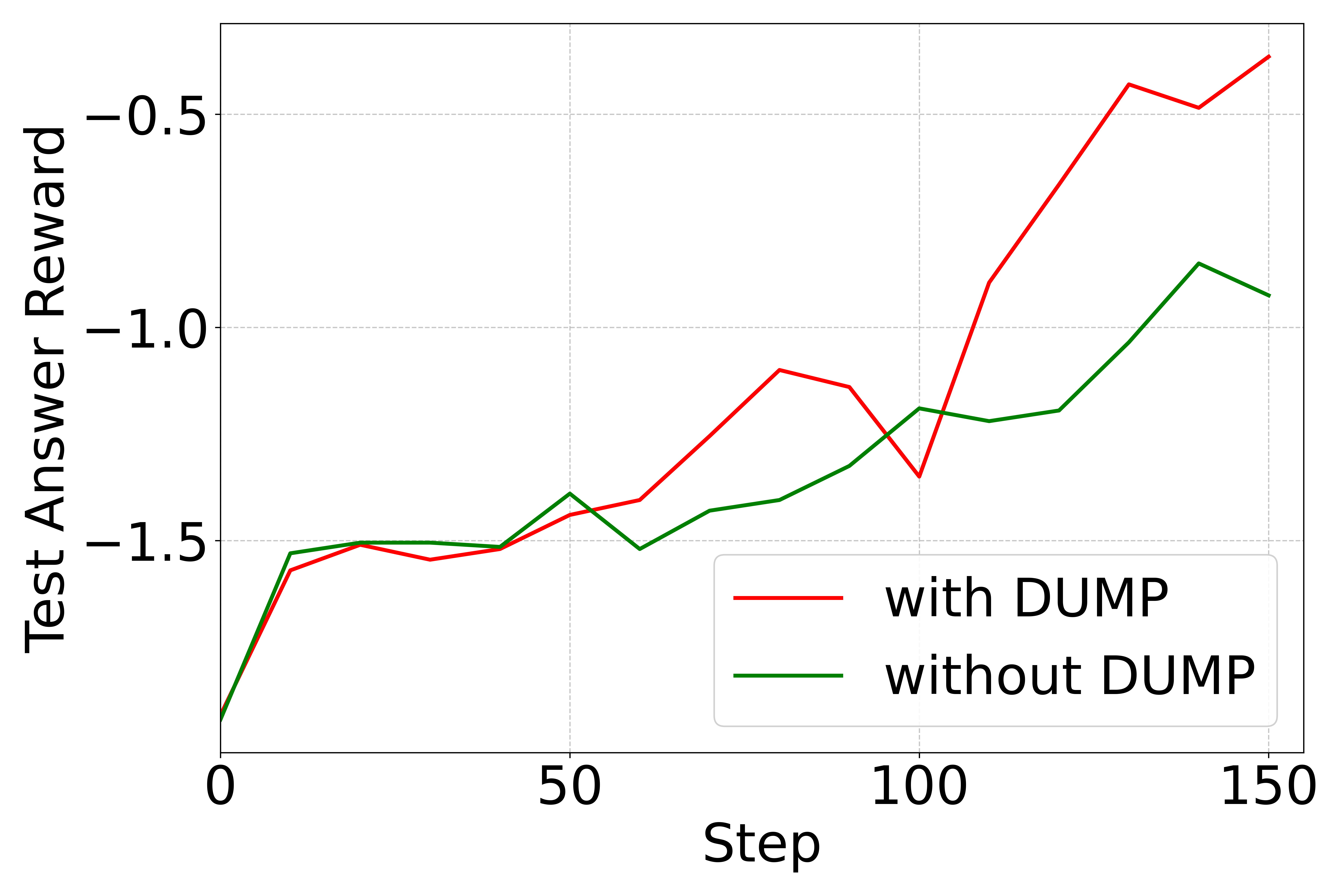}
        \caption{14 Characters}
    \end{subfigure}
    
    \caption{Effectiveness of \sys{} on the K\&K puzzle dataset mixed with 12 distributions defined by the number of characters in each puzzle (Setting 1). \sys consistently achieves higher answer reward on test dataset compared to baseline. The model used here is Qwen2.5-7B-Instruct-1M.}
    \label{fig:answer_reward}
    \vspace{-0.4cm}
\end{figure}

\begin{figure}[t]
    \begin{subfigure}[t]{0.33\columnwidth}
        \centering
        \includegraphics[width=\columnwidth]{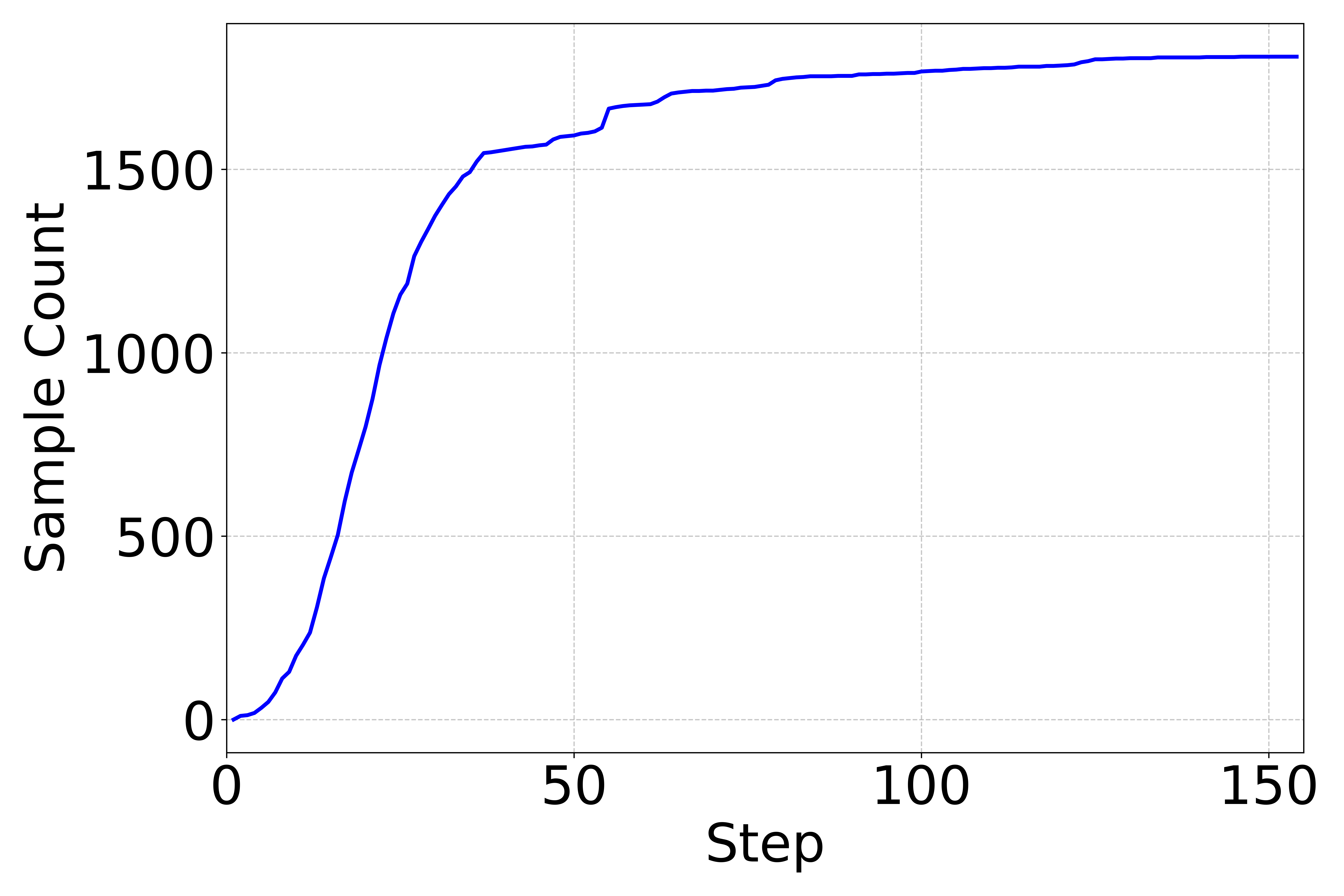}
        \caption{3 Characters}
    \end{subfigure}
    \begin{subfigure}[t]{0.33\columnwidth}
        \centering
        \includegraphics[width=\columnwidth]{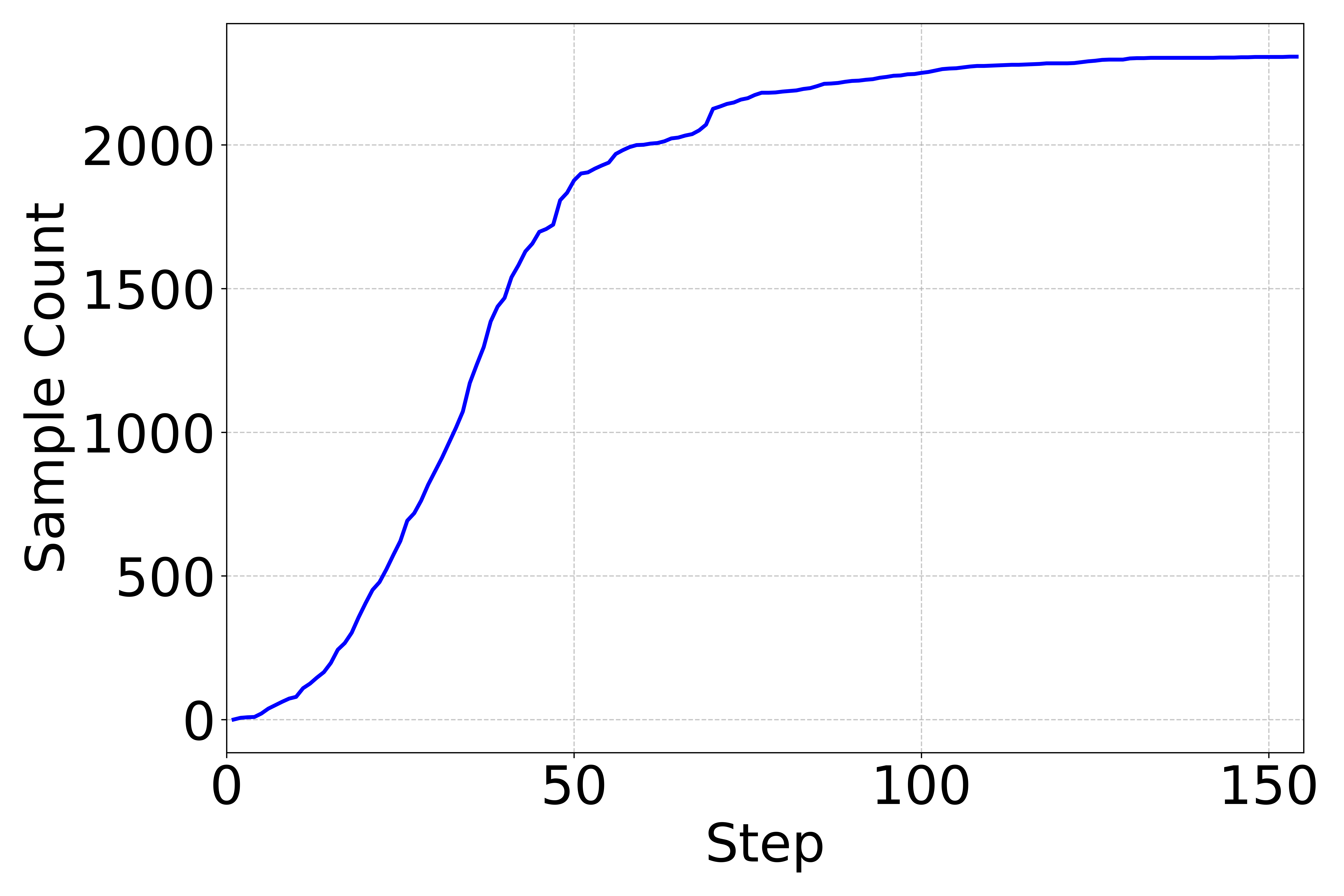}
        \caption{4 Characters}
    \end{subfigure}
    \begin{subfigure}[t]{0.33\columnwidth}
        \centering
        \includegraphics[width=\columnwidth]{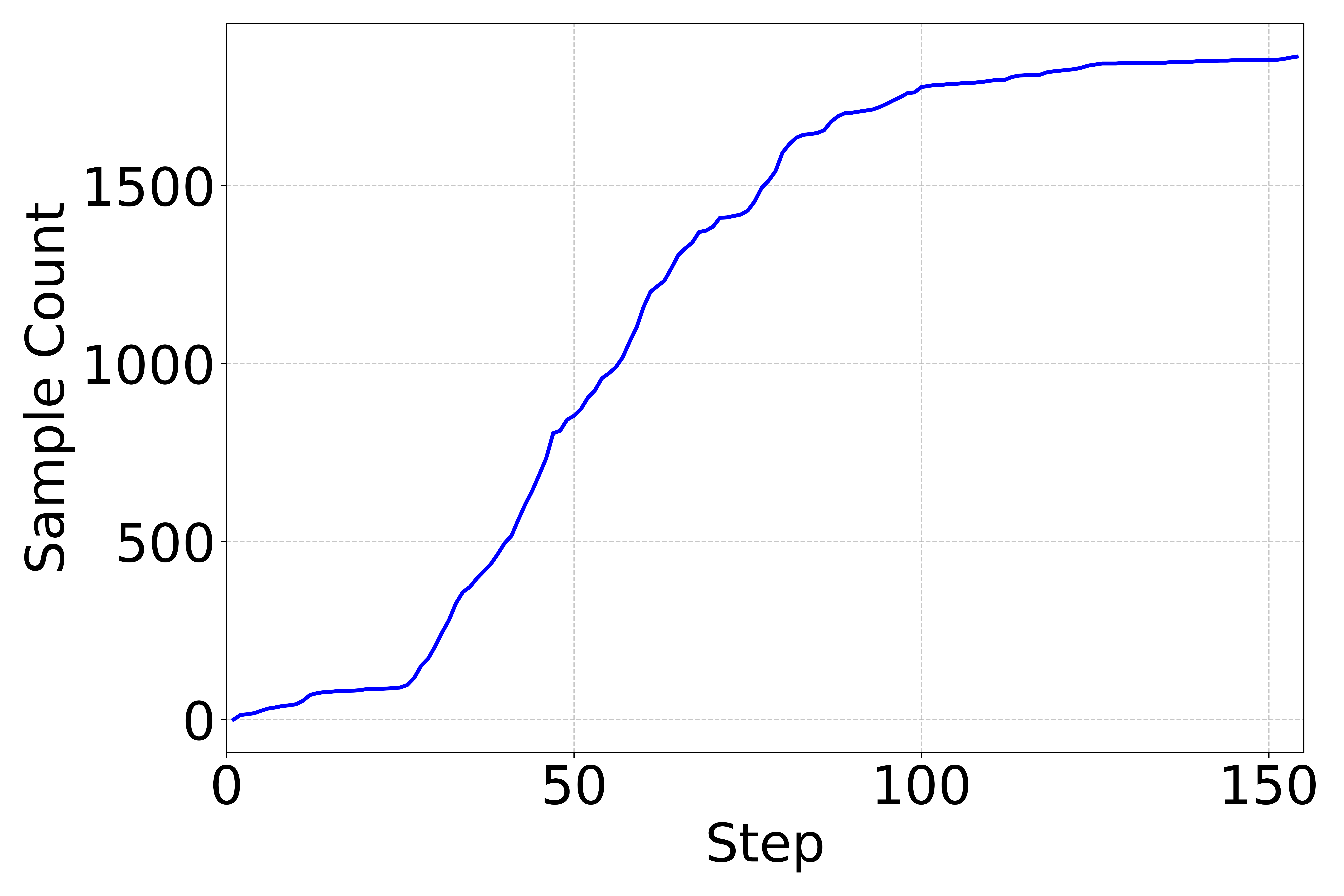}
        \caption{5 Characters}
    \end{subfigure}
    
    \begin{subfigure}[t]{0.33\columnwidth}
        \centering
        \includegraphics[width=\columnwidth]{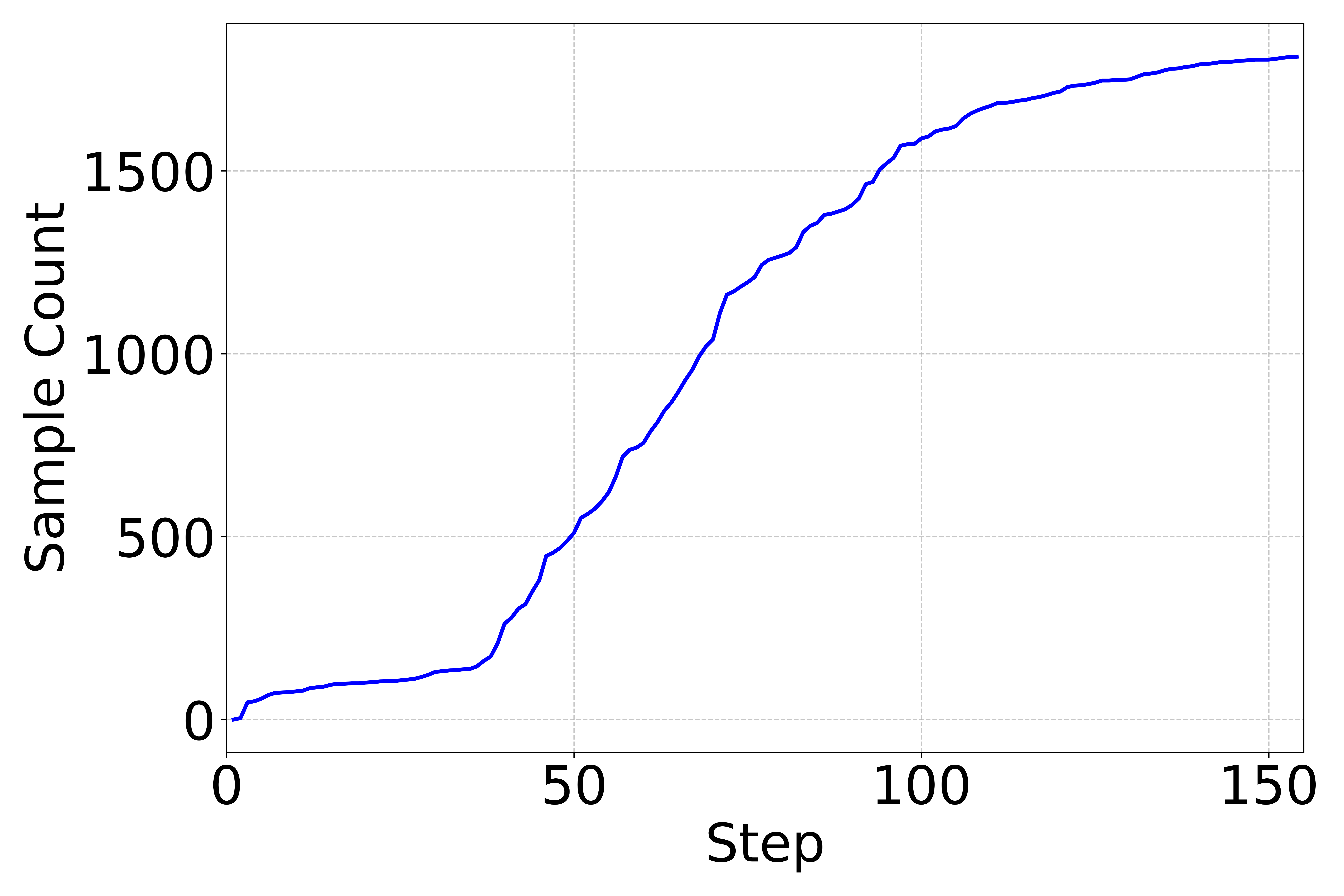}
        \caption{6 Characters}
    \end{subfigure}
    \begin{subfigure}[t]{0.33\columnwidth}
        \centering
        \includegraphics[width=\columnwidth]{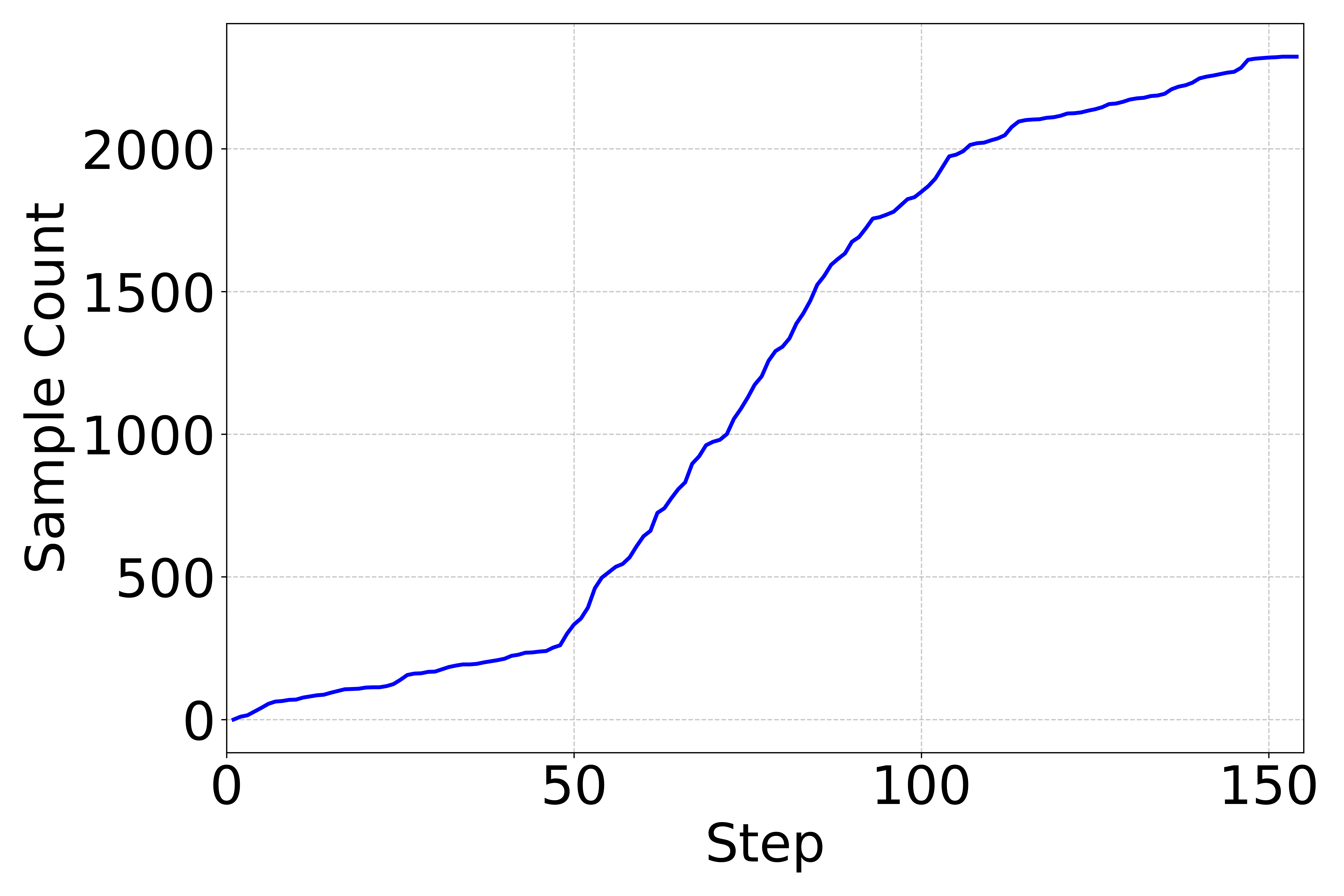}
        \caption{7 Characters}
    \end{subfigure}
    \begin{subfigure}[t]{0.33\columnwidth}
        \centering
        \includegraphics[width=\columnwidth]{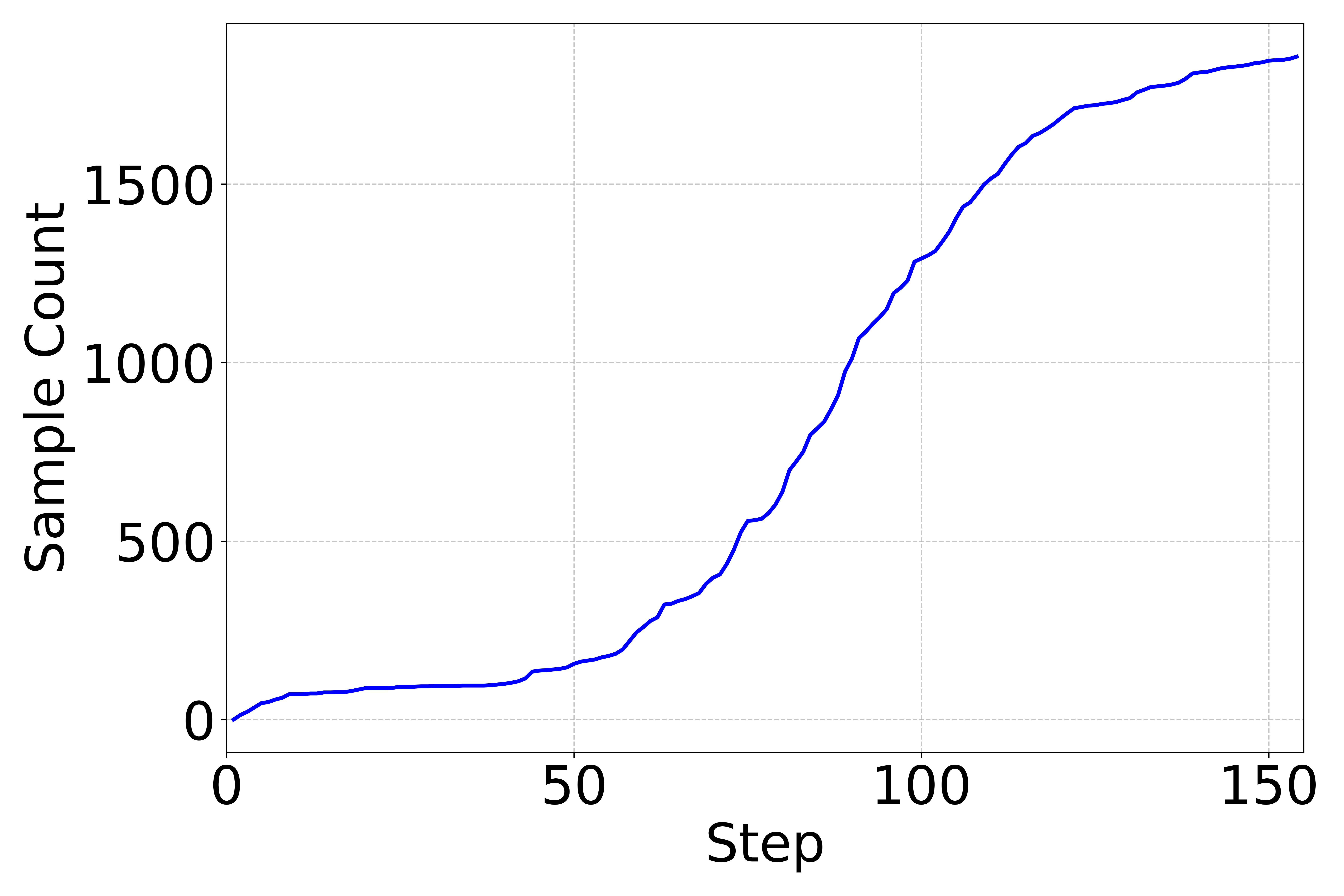}
        \caption{8 Characters}
    \end{subfigure}
    
    \begin{subfigure}[t]{0.33\columnwidth}
        \centering
        \includegraphics[width=\columnwidth]{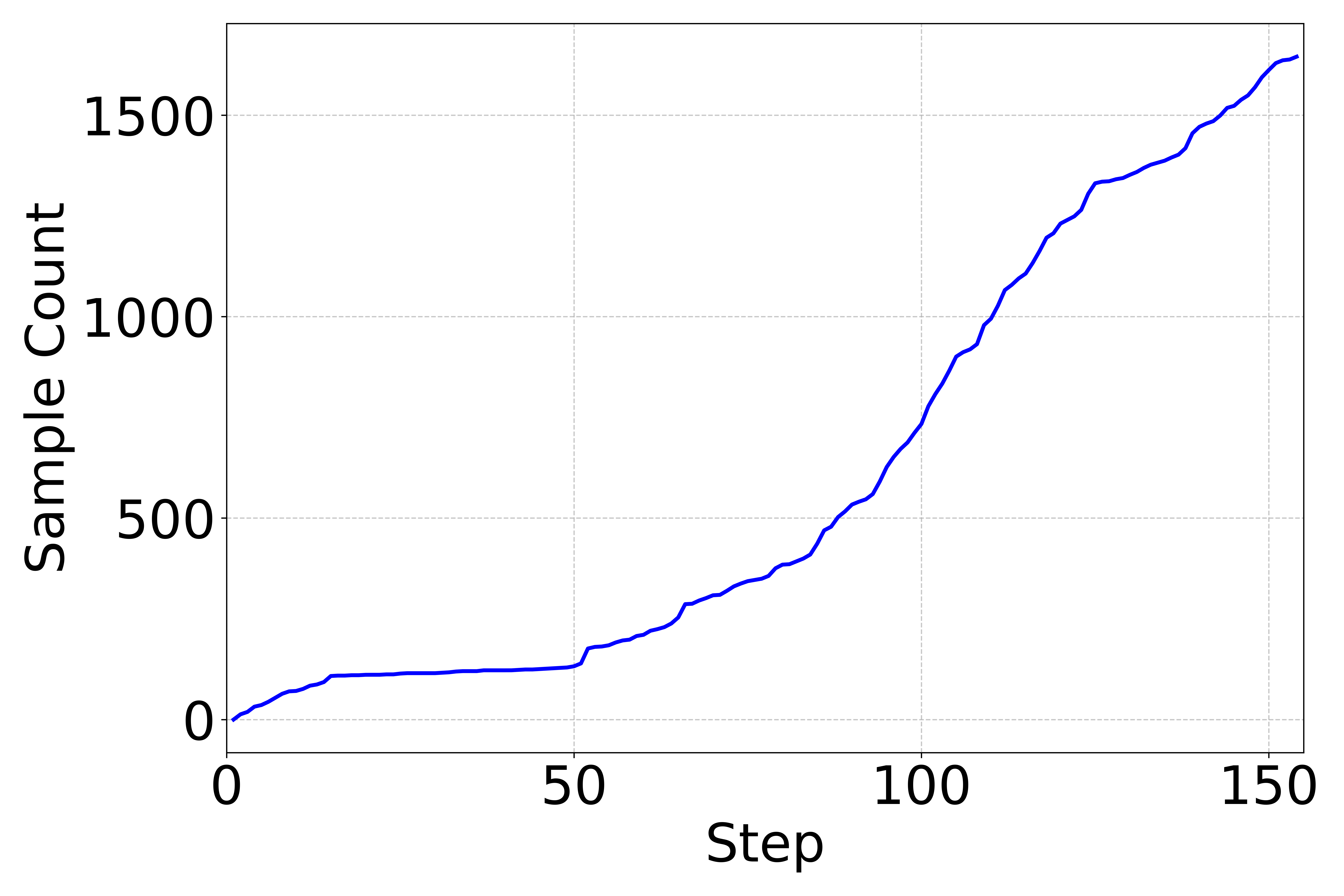}
        \caption{9 Characters}
    \end{subfigure}
    \begin{subfigure}[t]{0.33\columnwidth}
        \centering
        \includegraphics[width=\columnwidth]{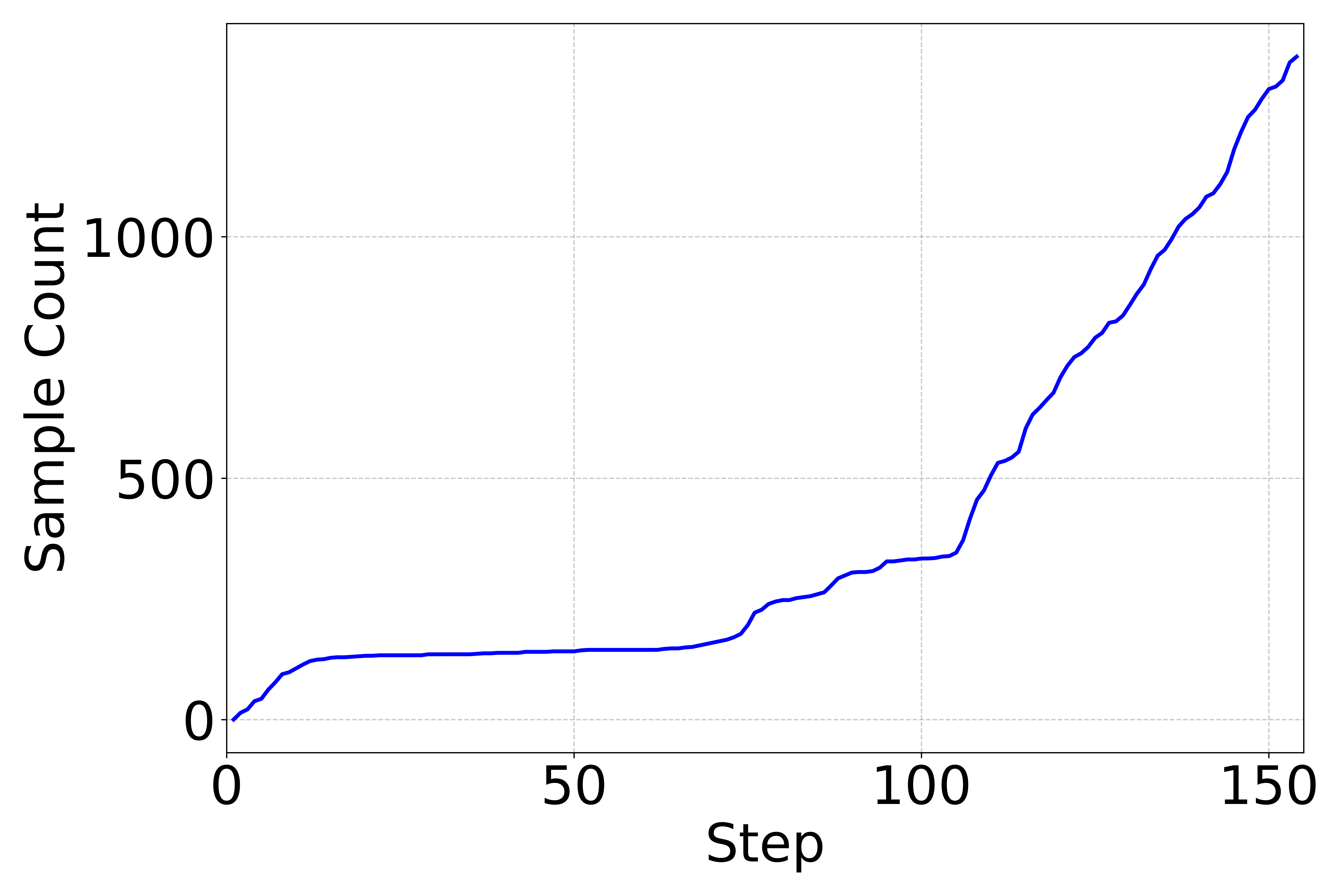}
        \caption{10 Characters}
    \end{subfigure}
    \begin{subfigure}[t]{0.33\columnwidth}
        \centering
        \includegraphics[width=\columnwidth]{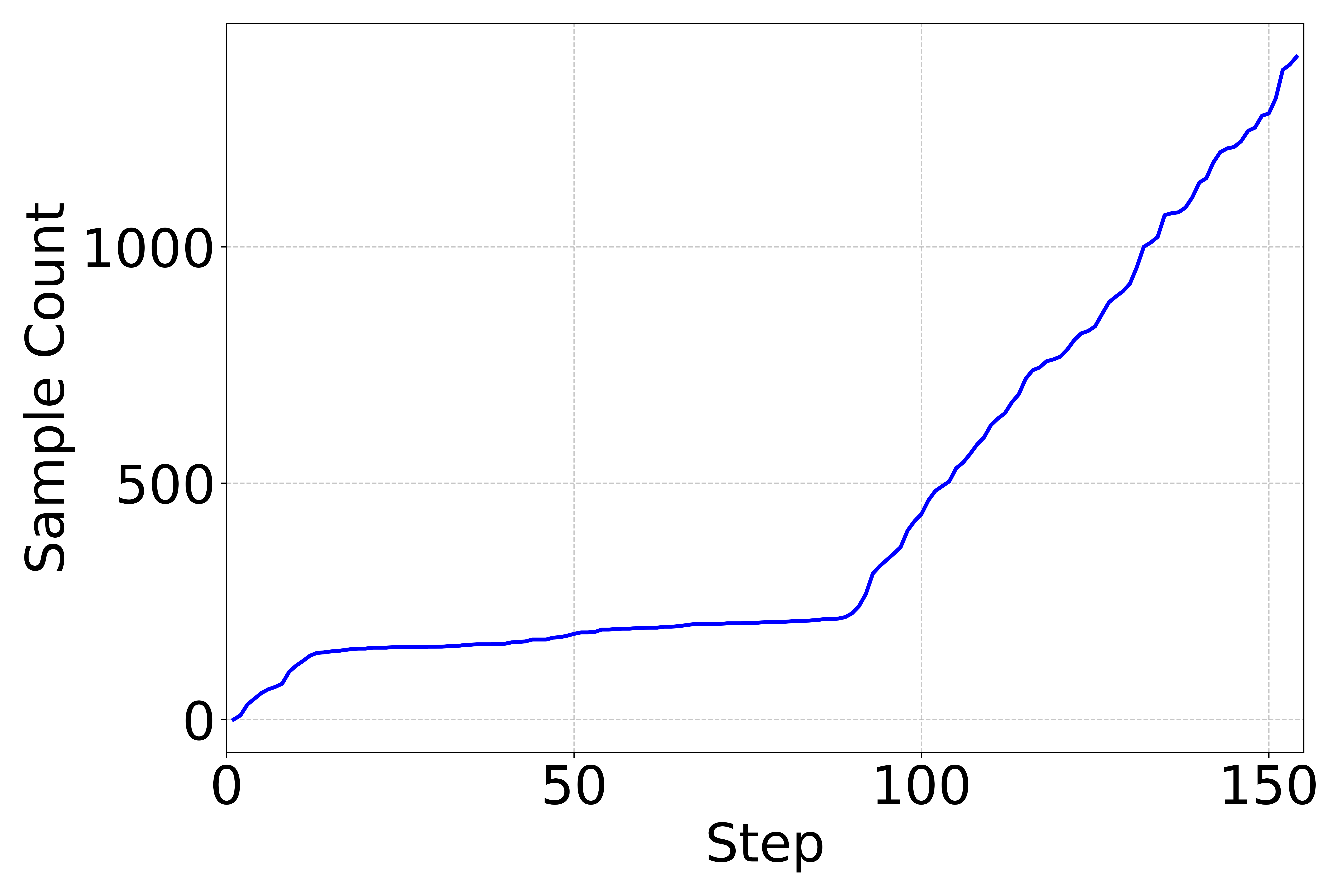}
        \caption{11 Characters}
    \end{subfigure}
    
    \begin{subfigure}[t]{0.33\columnwidth}
        \centering
        \includegraphics[width=\columnwidth]{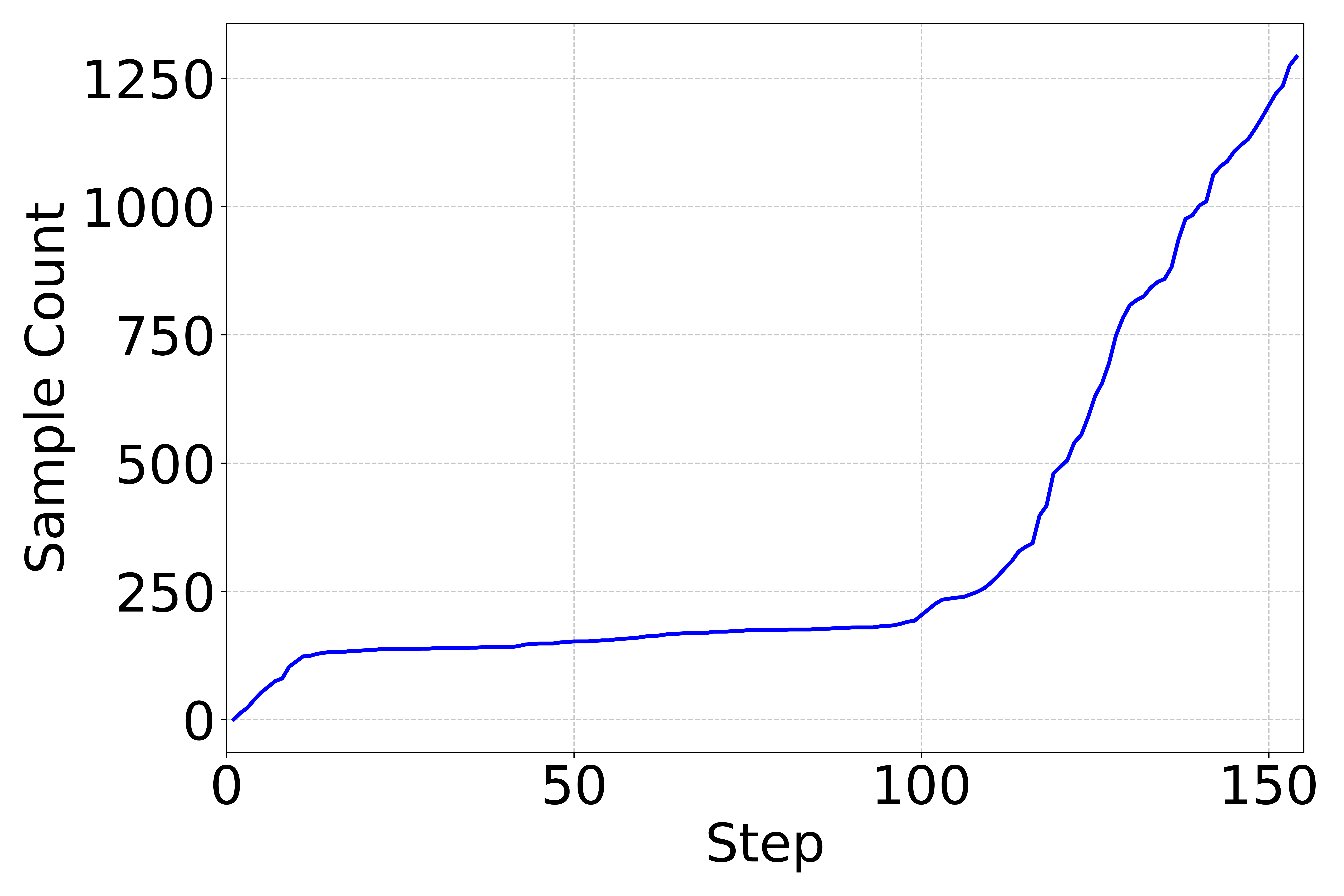}
        \caption{12 Characters}
    \end{subfigure}
    \begin{subfigure}[t]{0.33\columnwidth}
        \centering
        \includegraphics[width=\columnwidth]{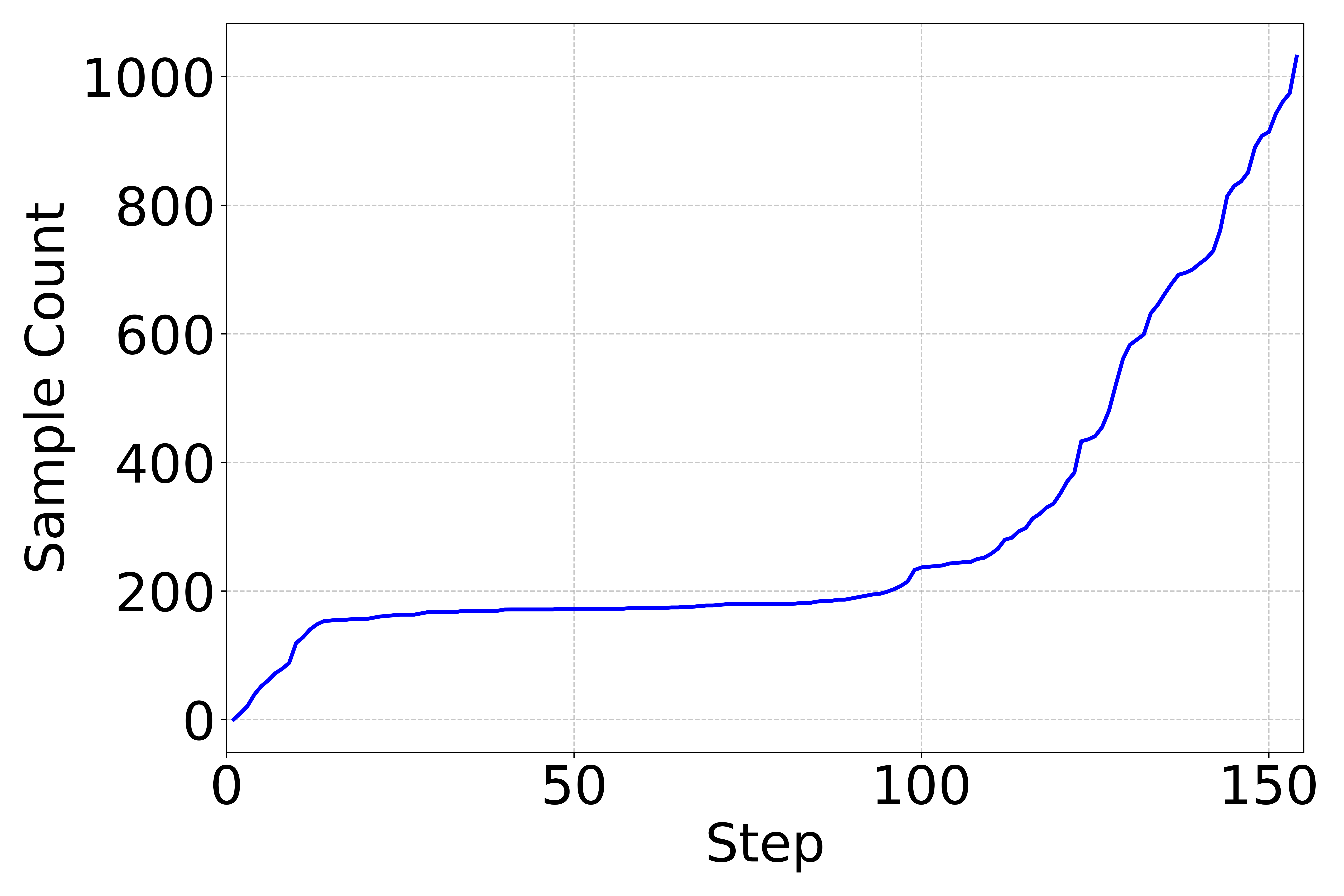}
        \caption{13 Characters}
    \end{subfigure}
    \begin{subfigure}[t]{0.33\columnwidth}
        \centering
        \includegraphics[width=\columnwidth]{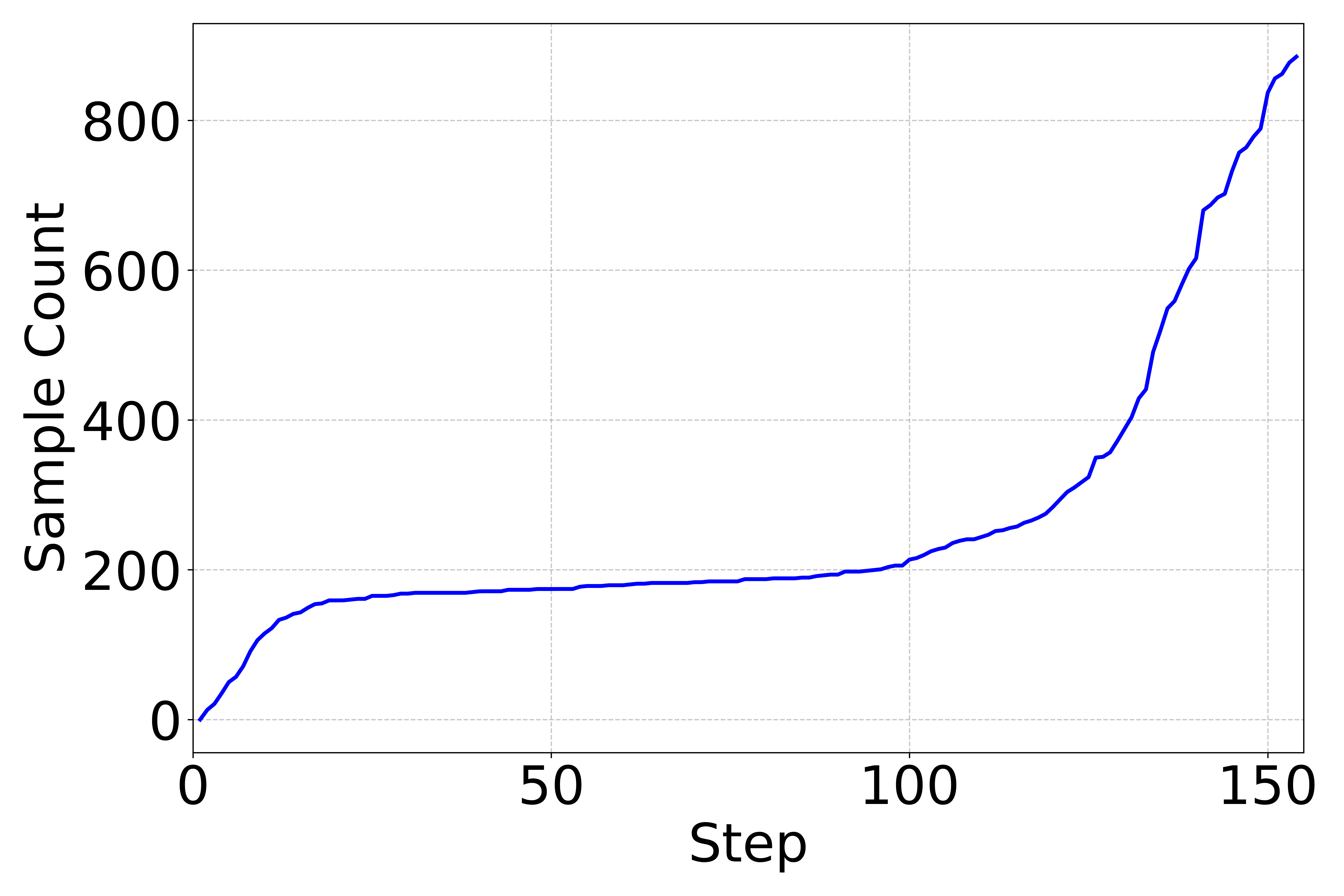}
        \caption{14 Characters}
    \end{subfigure}
    
    \caption{Curriculum (sample counts) induced by \sys{} across 12 K\&K puzzle distributions with increasing difficulty defined by the number of characters in each puzzle (Setting 1). Simpler distributions are automatically prioritized in early training, while more complex ones are progressively emphasized—both in an entirely automated manner—demonstrating automated distribution scheduling.}
    \label{fig:curriculum_profiles}
    \vspace{-0.4cm}
\end{figure}

\vspace{-0.2cm}
\subsection{Effectiveness of \sys}
\vspace{-0.2cm}

\emph{Setting 1: Post-training on the combination of K\&K puzzle datasets with different number of characters.} To evaluate the effectiveness of \sys{} in improving post-training efficiency and performance, we compare it against a uniform distribution sampling baseline across 12 distinct data distributions in the K\&K puzzle dataset. Each distribution corresponds to a fixed number of characters in the puzzle, ranging from 3 to 14. \autoref{fig:answer_reward} plots the test answer reward over training steps for each distribution, with and without \sys.
Across all distributions, \sys{} consistently outperforms the baseline, achieving faster convergence and higher test performance. The gains are particularly notable in mid- to high-difficulty distributions (e.g., 6 to 12 characters), where uniform sampling tends to struggle due to data underutilization. For example, in the 9-character distribution (\autoref{fig:answer_reward}g), the model trained with \sys{} achieves a reward of over 0.5, whereas the baseline remains below 0.0.
These results validate the core intuition of \sys: dynamically adjusting the sampling focus toward high-learnability distributions accelerates policy improvement while avoiding wasted effort on over-saturated or low-signal data. Notably, the improvement is achieved without any curriculum heuristics or manual data ordering—only by observing advantage signals and adapting online.

\emph{Setting 2: Post-training on diverse logic reasoning distributions.}
We apply \sys{} to 15 logic reasoning distributions including subsets of RuleTaker, ProofWriter, and K\&K (with varying difficulty levels), as well as datasets such as AR-LSAT, LogiQA, LogicNLI, and LongICLBench. As shown in \autoref{tab:combined_logic}, \sys{} improves the average test answer reward from 0.90 to 1.17. Notable improvements are observed on complex tasks such as AR-LSAT, where the reward increases from -0.70 to -0.52, and K\&K 7 Characters, from 0.56 to 1.02. These results demonstrate that \sys{} adaptively prioritizes undertrained but learnable distributions, leading to more efficient capability gains.

\emph{Setting 3: Post-training on diverse math data distributions.}
We further evaluate \sys{} on GSM-8K and different subsets of AIME grouped by competition years. As shown in \autoref{tab:math}, \sys{} raises the average test answer reward from -0.59 to -0.43, with the most significant gain on AIME 1994–2004, where performance improves from -1.50 to -1.02. These results highlight \sys{}’s robustness under distribution shifts and data imbalance.

\vspace{-0.2cm}
\subsection{Ablation Study on the Sampling Strategy}
\vspace{-0.2cm}

In this section, we ablate the sampling strategy used in \sys{}’s UCB-based scheduler. As described in \autoref{alg:cl}, our method applies soft sampling controlled by a temperature parameter. The greedy variant (temperature = 0) always selects the distribution with the highest UCB score, while our default uses a small temperature (0.1) to enable probabilistic sampling.
We conduct experiments under Setting 1, with a maximum training response length of 10240 tokens. After 100 training steps, the greedy strategy significantly underperforms due to its lack of exploration—it tends to lock onto a single distribution early and fails to adapt. For instance, on the 13- and 14-character K\&K tasks, the greedy variant achieves test answer rewards of $-0.91$ and $-1.38$, while soft sampling reaches $-0.66$ and $-1.16$, respectively.
These results highlight the importance of maintaining exploration via a non-zero temperature to prevent the scheduler from collapsing onto suboptimal distributions.

\vspace{-0.2cm}
\subsection{Analyzing the Automated Curriculum by \sys}
\vspace{-0.2cm}

To understand how \sys{} dynamically allocates training effort across data distributions, we analyze the sampling patterns induced by its UCB-based curriculum mechanism. \autoref{fig:curriculum_profiles} shows the cumulative number of samples drawn from each distribution (3 to 14 characters) over the course of training on K\&K puzzles with varying character numbers (Setting 1).
We observe a clear curriculum-like progression: distributions corresponding to simpler puzzles (e.g., 3–5 characters) are heavily sampled in the early stages of training, while more complex distributions (e.g., 10–14 characters) are gradually introduced and increasingly prioritized as training progresses. This pattern aligns with the model's evolving capacity—early training favors distributions with high initial advantage magnitudes, and as the model saturates on those, \sys{} shifts focus to underexplored but learnable distributions.
Importantly, this adaptive sampling behavior emerges automatically from empirical advantage signals without requiring manual specification of curriculum order. These results highlight \sys{}’s ability to construct an implicit, data-driven curriculum that mirrors traditional easy-to-hard strategies, while remaining responsive to online training dynamics.

\vspace{-0.2cm}
\section{Conclusion}
\vspace{-0.2cm}
\label{sec:conclusion}
In this work, we introduce a distribution-level curriculum learning framework for RL-based post-training of large language models. \sys{} leverages the expected absolute advantage as a learnability signal to adaptively allocate training focus across heterogeneous distributions. By formalizing scheduling as a multi-armed bandit and adopting a UCB-based sampling strategy, \sys{} balances exploitation and exploration in a principled way. Experiments demonstrate that \sys{} consistently improves convergence and final performance over baselines. These results highlight the value of distribution-aware curriculum learning in LLM RL post-training.

\bibliographystyle{unsrtnat}
\bibliography{reference}

\newpage

\appendix

\section{Proof for \autoref{th:learnability}}
\label{sec:proof_learnability}
\begin{theorem}[Expected Advantage Magnitude Reflects Learnability]
Given a policy \(\pi_{\theta}\) and a data distribution \(d\),
the expected absolute advantage \(\mathbb{E}_{x \sim d} \left[ \mathbb{E}_{o_i \sim \pi_\theta(\cdot|x)} \left[ |\hat{A}_i| \right] \right]\) serves as a proxy for how much that distribution \(d\) can help the model improve, where the distribution $d$ consisting of prompts $x \sim d$, each prompt has a group of sampled outputs $\{o_1, \dots, o_n\}$, and \(\hat{A}_i\) denotes the advantage of output \(o_i\).
\end{theorem}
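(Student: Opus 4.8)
The plan is to ground the proxy claim in first-order optimization: I would show that the expected absolute advantage controls the magnitude of the policy gradient that GRPO applies on distribution $d$, and that this gradient magnitude in turn governs the one-step improvement the model can extract from training on $d$. Since the advantage estimates are already the per-sample multipliers in the GRPO surrogate, this gives a direct and computationally free handle on learnability, matching the remark in the excerpt that the quantity is ``lightweight to compute.''

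First, I would recall the (unclipped) GRPO gradient, which near the current iterate reads $\nabla_\theta \mathcal{J}(\theta) = \mathbb{E}_{x\sim d}\mathbb{E}_{o\sim\pi_\theta(\cdot|x)}[\hat{A}(o)\,\nabla_\theta\log\pi_\theta(o|x)]$ up to the KL term, so that each sample contributes a score-function direction weighted by its advantage. Second, I would invoke a first-order expansion of the objective under a gradient step of size $\eta$, namely $\mathcal{J}(\theta+\eta g)-\mathcal{J}(\theta)=\eta\|g\|^2+O(\eta^2)$ with $g=\nabla_\theta\mathcal{J}(\theta)$, so that the realizable improvement on $d$ scales with $\|g\|^2$. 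Third, I would bound the gradient norm by the expected absolute advantage: applying the triangle inequality and boundedness of the score norm, $\|g\|\le \mathbb{E}[\,|\hat{A}(o)|\,\|\nabla_\theta\log\pi_\theta(o|x)\|\,]\le C\,\mathbb{E}_{x\sim d}\mathbb{E}_{o}[\,|\hat{A}(o)|\,]$. This establishes the necessary direction: whenever the expected absolute advantage is small, the attainable improvement on $d$ is small, and in the degenerate GRPO case where all group rewards coincide, $\hat{A}_i\equiv 0$ forces $g=0$ and no first-order learning signal exists. I would take the absolute value (rather than the signed advantage) precisely to prevent cancellation across positive and negative advantages within the expectation, which would otherwise understate the available signal.

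The main obstacle is the converse direction---arguing that a large expected absolute advantage implies a large realizable improvement---because the score-weighted advantages entering $g$ can partially cancel when different samples push $\theta$ in opposing directions, so $\mathbb{E}[|\hat{A}|]$ only upper-bounds, rather than exactly equals, $\|g\|$. This is exactly why the statement is phrased as a \emph{proxy} rather than an identity. I would close this gap under a mild non-degeneracy assumption: that the Fisher-information-weighted alignment between advantages and score directions is bounded below, i.e.\ the advantage-weighted gradient does not systematically cancel, yielding a matching lower bound $\|g\|\ge c\,\mathbb{E}_{x\sim d}\mathbb{E}_o[\,|\hat{A}(o)|\,]$. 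Combining the two bounds gives $\|g\|=\Theta(\mathbb{E}[|\hat{A}|])$, and hence one-step improvement of order $\mathbb{E}[|\hat{A}|]^2$, which justifies ranking distributions by their expected absolute advantage as a learnability criterion and feeds directly into the bandit formulation of the following section.
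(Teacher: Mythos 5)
Your proposal is correct and follows the same core route as the paper: both arguments ground the claim in the policy-gradient identity $\nabla_\theta \mathcal{J}(\theta) = \mathbb{E}_{x \sim d}\,\mathbb{E}_{o \sim \pi_\theta(\cdot|x)}\left[ \hat{A}(o)\, \nabla_\theta \log \pi_\theta(o \mid x) \right]$ and then argue that the expected absolute advantage governs the magnitude of this gradient, hence the learning signal available from $d$. Where you differ is in the handling of the inequality directions, and here your version is actually the more careful one. The paper asserts $\left\| \nabla_\theta \mathcal{J}(\theta) \right\| \gtrsim \mathbb{E}\left[ |\hat{A}|\, \|\nabla_\theta \log \pi_\theta\| \right]$, i.e.\ a \emph{lower} bound on the gradient norm, and then assumes the score norm is bounded and slowly varying. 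But the triangle (Jensen) inequality only gives the \emph{upper} bound $\|\mathbb{E}[X]\| \le \mathbb{E}[\|X\|]$; the paper's lower bound silently assumes that the advantage-weighted score directions do not cancel, which is exactly the obstruction you name explicitly. Your proof derives the legitimate upper bound $\|g\| \le C\, \mathbb{E}[|\hat{A}|]$, isolates the cancellation issue as the reason the statement can only be a proxy, and closes the converse direction with a stated non-degeneracy assumption yielding $\|g\| = \Theta(\mathbb{E}[|\hat{A}|])$. You also add a step the paper omits: the first-order expansion $\mathcal{J}(\theta + \eta g) - \mathcal{J}(\theta) = \eta \|g\|^2 + O(\eta^2)$, which converts "large gradient norm" into "large realizable one-step improvement" and thereby justifies the word "improve" in the theorem statement rather than stopping at gradient magnitude. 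In short: same skeleton, but your version makes explicit (and repairs) the two places where the paper's argument is heuristic — the direction of the norm bound and the link from gradient norm to actual objective gain — at the modest cost of one additional assumption.
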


\begin{proof}
Let \(\pi_\theta\) be the current model policy. Consider a data distribution \(d\), where \(x \sim d\) are prompts and \(\{o_1, \dots, o_n\} \sim \pi_\theta(\cdot|x)\) are sampled outputs. For each output \(o_i\), the advantage is estimated as
\[
\hat{A}_i = r_i - b(x),
\]
where \(r_i\) is the reward assigned to \(o_i\) and \(b(x)\) is a baseline (e.g., the mean reward over the group). The policy gradient under common policy-gradient methods (e.g., PPO or GRPO) can be written as:
\[
\nabla_\theta \mathcal{J}(\theta) = \mathbb{E}_{x \sim d} \left[ \mathbb{E}_{o_i \sim \pi_\theta(\cdot|x)} \left[ \hat{A}_i \cdot \nabla_\theta \log \pi_\theta(o_i \mid x) \right] \right].
\]

Now consider the magnitude of the gradient vector. The strength of the training signal from \(d\) depends on the expected norm of the gradient, which is bounded below by:
\[
\left\| \nabla_\theta \mathcal{J}(\theta) \right\| \gtrsim \mathbb{E}_{x \sim d} \left[ \mathbb{E}_{o_i \sim \pi_\theta(\cdot|x)} \left[ |\hat{A}_i| \cdot \left\| \nabla_\theta \log \pi_\theta(o_i \mid x) \right\| \right] \right].
\]

Assuming that \(\left\| \nabla_\theta \log \pi_\theta(o_i \mid x) \right\|\) is bounded and varies slowly across \(d\), the dominant term affecting the gradient norm is:
\[
\mathbb{E}_{x \sim d} \left[ \mathbb{E}_{o_i \sim \pi_\theta(\cdot|x)} \left[ |\hat{A}_i| \right] \right].
\]

Thus, the expected absolute advantage serves as a proxy for the learning signal magnitude contributed by distribution \(d\). The expected absolute advantage reflects how much training on distribution \(d\) can improve the model parameters, making it a suitable signal for curriculum scheduling.

\end{proof}

\section{Theoretical Justification for UCB-Based Distribution Scheduling}
\label{sec:theory_ucb}

We provide a theoretical justification for using Upper Confidence Bound (UCB) as a strategy for scheduling training over data distributions in RL-based post-training. Our objective is to maximize the cumulative learnability gain over \(T\) training steps, defined as:
\[
\max_{\{d_t\}_{t=1}^T} \sum_{t=1}^{T} L(d_t), \quad \text{where} \quad L(d) = \mathbb{E}_{x \sim d} \left[ \mathbb{E}_{o \sim \pi_\theta(\cdot|x)} \left[ |\hat{A}(o)| \right] \right].
\]

This setting can be viewed as a stochastic multi-armed bandit (MAB) problem, where each data distribution \(d_j \in \mathcal{D}\) corresponds to an arm with unknown reward \(L(d_j)\), interpreted as the expected absolute advantage from training on samples from \(d_j\). At each training step \(t\), the learner selects a distribution \(d_t\) and obtains an empirical reward \(\hat{L}(d_t)\) by averaging the absolute advantages observed in the batch.

We define the regret as the gap between the cumulative learnability gain of the best fixed distribution \(d^* = \arg\max_d L(d)\) and that of the learner's actual selections:
\[
\text{Regret}(T) = \sum_{t=1}^T L(d^*) - \sum_{t=1}^T L(d_t).
\]

To analyze this regret, we make the following assumptions:

\begin{enumerate}
    \item For each distribution \(d_j\), the per-output absolute advantages \(|\hat{A}(o)|\), where \(o \sim \pi_\theta(\cdot|x)\), are i.i.d. and bounded in \([0, C]\) for some constant \(C > 0\).
    \item The true expected advantage \(L(d_j)\) remains approximately stationary over a local training window, enabling meaningful online adaptation.
\end{enumerate}

\textit{Note:} In practice, we can clip or normalize \(|\hat{A}(o)|\) to satisfy the boundedness condition. The introduction of the constant \(C\) only scales the regret by a constant factor and does not affect the asymptotic rate of convergence.

Under these assumptions, the following regret bound holds:

\begin{theorem}
\label{thm:ucb_regret}
Let \(\mathcal{D} = \{d_1, \dots, d_N\}\) be a set of data distributions with fixed expected rewards \(L(d_j) \in [0, C]\). Then, applying the UCB1 algorithm to the empirical reward observations yields the regret bound:
\[
\text{Regret}(T) \leq O\left( C \cdot \sum_{j: \Delta_j > 0} \frac{\log T}{\Delta_j} \right),
\quad \text{where} \quad \Delta_j = L(d^*) - L(d_j).
\]
\end{theorem}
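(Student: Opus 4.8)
The plan is to reduce the problem, under Assumptions 1 and 2, to a standard stochastic multi-armed bandit and then run the classical UCB1 analysis of \citet{auer2002finite}, carrying the boundedness constant $C$ through. The first step is the usual regret decomposition: writing $N_j(T)$ for the number of steps at which distribution $d_j$ is selected up to horizon $T$, linearity of expectation gives $\mathbb{E}[\text{Regret}(T)] = \sum_{j:\Delta_j>0} \Delta_j\, \mathbb{E}[N_j(T)]$. The whole task then reduces to bounding the expected number of pulls $\mathbb{E}[N_j(T)]$ of each suboptimal arm.

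The core step is to establish $\mathbb{E}[N_j(T)] \le \tfrac{c\,C^2 \log T}{\Delta_j^2} + O(1)$ for every $j$ with $\Delta_j>0$ and an absolute constant $c$. A suboptimal arm $d_j$ is selected at step $t$ only if its UCB index is at least that of $d^*$, i.e. $\hat{L}(d_j) + c_{t,n_j} \ge \hat{L}(d^*) + c_{t,n^*}$, where $c_{t,n} = \sqrt{2\log(t+1)/(n+1)}$ is the exploration bonus of \autoref{eq:ucb}. I would split this bad event into three parts: (i) the optimal index underestimates its mean, $\hat{L}(d^*) + c_{t,n^*} < L(d^*)$; (ii) the suboptimal empirical mean overestimates, $\hat{L}(d_j) > L(d_j) + c_{t,n_j}$; and (iii) a deterministic separation failure in which both estimates are accurate yet the bonus is still large enough for the indices to overlap. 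Assumption 1 (i.i.d., bounded advantages) lets me apply Hoeffding's inequality to (i) and (ii); with the $\sqrt{2\log t}$ scaling of the bonus, each has probability $O(t^{-4})$ after accounting for the $[0,C]$ range, so their total over $t$ is a constant. Event (iii) cannot occur once $n_j \ge 8C^2 \log t/\Delta_j^2$, since then $2 c_{t,n_j} < \Delta_j$. Combining the three yields the per-arm pull bound.

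Summing over suboptimal arms gives $\mathbb{E}[\text{Regret}(T)] \le \sum_{j:\Delta_j>0}\big(\tfrac{c\,C^2\log T}{\Delta_j}+O(\Delta_j)\big)$, which is $O\big(C\sum_{j:\Delta_j>0}\log T/\Delta_j\big)$ once $C$ is treated as a fixed constant (the Hoeffding step literally produces a $C^2$ factor, but since $C$ is merely the clipping constant this is absorbed into the big-$O$, exactly as the note preceding the theorem remarks; equivalently, one normalizes the advantages to $[0,1]$ before running UCB1 and the textbook $\tfrac{8\log T}{\Delta_j}$ bound applies verbatim). The routine calculation above is not the real difficulty.

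The hard part is conceptual, and it is quarantined by Assumption 2: in the actual algorithm the arm reward $L(d_j)=\mathbb{E}_{x\sim d_j}\mathbb{E}_{o}[|\hat{A}(o)|]$ drifts as $\theta$ is updated, so the bandit is genuinely nonstationary and the i.i.d. premise of Hoeffding fails globally. The honest reading is therefore that the theorem certifies near-optimality of UCB scheduling over any window on which the advantage statistics are approximately stationary; a fully rigorous extension to the drifting regime would require a nonstationary-bandit analysis (e.g. sliding-window or discounted UCB), which the sliding window $\mathcal{A}_{d_j}^w$ of \autoref{alg:cl} is designed to support but which lies outside the stated assumptions. A secondary caveat to flag is that \autoref{alg:cl} uses softmax rather than the hard $\arg\max$ of UCB1, so the statement as written pertains to the idealized hard-selection variant.
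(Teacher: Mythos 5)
Your proposal is correct and takes essentially the same route as the paper: both reduce the scheduling problem to a stochastic multi-armed bandit with rewards in $[0,C]$ and invoke the classical UCB1 analysis of Auer et al., the paper citing the regret bound directly (stating the intermediate bound $\sum_{j:\Delta_j>0}\bigl(8C^2\log T/\Delta_j + O(\Delta_j)\bigr)$ and the Hoeffding concentration step) while you unfold the same standard argument in full (regret decomposition via pull counts, the three-event split, and the deterministic threshold $n_j \gtrsim C^2\log t/\Delta_j^2$). Your added caveats---the honest handling of the $C$ versus $C^2$ factor, and the mismatch between Algorithm 1's softmax selection and UCB1's hard selection rule---are points the paper's proof glosses over, but they do not alter the argument.
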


\begin{proof}
This result is a direct application of the classical UCB1 regret bound~\citep{auer2002finite}, extended to the case where reward values lie in \([0, C]\). Let \(d^* = \arg\max_d L(d)\) be the optimal distribution, and let \(\Delta_j = L(d^*) - L(d_j)\) denote the suboptimality gap for each arm \(d_j\).

At each time step \(t\), UCB1 selects the distribution \(d_j\) with the highest upper confidence bound:
\[
\text{UCB}(d_j) = \hat{L}(d_j) + \sqrt{ \frac{2C^2 \log t}{n_j} },
\]
where \(n_j\) is the number of times distribution \(d_j\) has been sampled so far, and \(\hat{L}(d_j)\) is the empirical mean of observed rewards (mean absolute advantages).

Under the assumptions that rewards are i.i.d. and bounded in \([0, C]\), the Hoeffding inequality guarantees that with high probability the empirical mean concentrates around the true mean \(L(d_j)\), and the UCB selection mechanism will only pick suboptimal arms a logarithmic number of times. Based on UCB1 regret bound~\citep{auer2002finite}, The cumulative regret is therefore bounded by:
\[
\text{Regret}(T) \leq \sum_{j: \Delta_j > 0} \left( \frac{8 C^2 \log T}{\Delta_j} + O(\Delta_j) \right),
\]
which simplifies to the stated asymptotic bound:
\[
\text{Regret}(T) = O\left( C \cdot \sum_{j: \Delta_j > 0} \frac{\log T}{\Delta_j} \right).
\]
\end{proof}

This result shows that our distribution-level scheduling strategy, when driven by UCB over empirical advantage rewards, is provably efficient. It dynamically concentrates training on distributions with high estimated learnability while ensuring sufficient exploration, with regret that scales logarithmically in \(T\) and linearly in \(1/\Delta_j\).

\section{Comparison to Heuristic Curriculum}
\label{sec:comparison}
Heuristic curricula, which manually specify a fixed training schedule over data distributions—e.g., training on Distribution A for N steps before switching to Distribution B—have been explored in prior work~\cite{xie2025logic,team2025kimi}, particularly in environments where task difficulty or domain progression is well understood. However, such approaches have several limitations that make them less suitable for our setting.
First, effective heuristic scheduling requires strong prior knowledge about the relative difficulty and learnability of each distribution. In our setting, which involves diverse domains such as logic reasoning, mathematics, and programming, such prior knowledge is often unavailable or misleading. For example, a distribution may appear “easier” but provide low learning signal, or seem “harder” but actually yield high gradient utility. This makes it extremely difficult to construct robust, generalizable heuristics across tasks.
Second, heuristic curricula are static and cannot adapt to the evolving needs of the model during training. In contrast, DUMP dynamically adjusts sampling priorities based on actual model performance—measured via policy advantages—allowing it to focus on the most beneficial distributions at each stage of learning.
Finally, the lack of standardized or widely accepted heuristic curricula for our task suite makes it hard to conduct fair and meaningful comparisons. Instead, we benchmark DUMP against uniform sampling and adaptive baselines, which are more reflective of current best practices in large-scale post-training pipelines.

\section{Limitations}
\label{sec:limitation}
First, while the core idea of distribution-level curriculum learning is broadly applicable, we evaluate DUMP only in the context of large language models (LLMs) and do not extend the experiments to multimodal large language models (MLLMs) due to computational constraints. Second, our experiments are limited to 7B-scale models. Scaling our method to larger models remains an important direction for future work.

\begin{figure}[t]
		\centering
		\footnotesize
    \includegraphics[width=1\columnwidth]{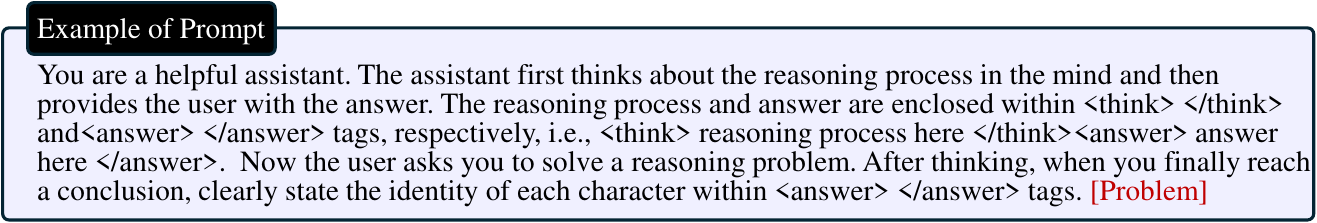}
	\caption{Example of prompt used.}\label{fig:prompt_example}
\end{figure}

\end{document}